\def\colorful{0}
\newif\ifhyper\IfFileExists{hyperref.sty}{\hypertrue}{\hyperfalse}
\ifhyper\usepackage{hyperref}\fi
\def\nnewcolor{1}
\newcommand{\new}[1]{{\color{red} #1}}
\newcommand{\green}[1]{{\color{green} #1}}
\newcommand{\yellow}[1]{{\color{yellow} #1}}
\newcommand{\newblue}[1]{{\color{blue} #1}}
\newcommand{\new}[1]{{#1}}
\newcommand{\newblue}[1]{{#1}}
\newcommand{\yellow}[1]{{#1}}
\newcommand{\green}[1]{{#1}}
\newtheorem{theorem}{Theorem}[section]
\newtheorem{lemma}[theorem]{Lemma}
\newtheorem{informal theorem}[theorem]{Theorem (informal statement)}
\newtheorem{claim}[theorem]{Claim}
\newtheorem{remark}[theorem]{Remark}
\theoremstyle{definition}
\newtheorem{definition}[theorem]{Definition}
\newcommand{\eqdef}{\stackrel{{\mathrm {\footnotesize def}}}{=}}
\newcommand{\bx}{\mathbf{x}}
\newcommand{\by}{\mathbf{y}}
\newcommand{\bv}{\mathbf{v}}
\newcommand{\bw}{\mathbf{w}}
\newcommand{\br}{\mathbf{r}}
\newcommand{\e}{\mathbf{e}}
\newcommand{\err}{\mathrm{err}}
\newcommand{\LTF}{\mathcal{H}}
\newcommand{\B}{\mathbb{B}}
\newcommand{\R}{\mathbb{R}}
\newcommand{\Z}{\mathbb{Z}}
\newcommand{\E}{\mathbf{E}}
\newcommand{\eps}{\epsilon}
\newcommand{\pr}{\mathbf{Pr}}
\renewcommand{\Pr}{\mathbf{Pr}}
\newcommand{\poly}{\mathrm{poly}}
\newcommand{\sgn}{\mathrm{sign}}
\newcommand{\sign}{\mathrm{sign}}
\newcommand{\Ex}{\mathop{{\bf E}\/}}
\newcommand{\opt}{\mathrm{OPT}}
\newcommand{\D}{\mathcal{D}}
\newcommand{\LR}{\mathrm{LeakyRelu}}
\newcommand{\Ind}{\mathds{1}}
\newcommand{\littlesum}{\mathop{\textstyle \sum}}
\newcommand{\be}{\mathbf{e}}
\newcommand{\wh}{\widehat}
\newcommand{\ltwo}[1]{\left\lVert#1\right\rVert_2}
\newcommand{\dotp}[2]{\langle #1, #2 \rangle}
\title{Distribution-Independent PAC Learning of Halfspaces\\ with Massart Noise}
\author{
Ilias Diakonikolas\thanks{Supported by NSF Award CCF-1652862 (CAREER) and a Sloan Research Fellowship. Part of this 
work was performed at the Simons Institute for the Theory of Computing during the program on Foundations of Data Science.}\\
University of Wisconsin-Madison\\
{\tt ilias@cs.wisc.edu}\\
\and
Themis Gouleakis\thanks{This research was performed while the author was a postdoctoral researcher at USC,
hosted by Ilias Diakonikolas.}\\
Max Planck Institute for Informatics\\
{\tt themis.gouleakis@gmail.com}\\
\and
Christos Tzamos\\ University of Wisconsin-Madison\\
{\tt tzamos@wisc.edu}
}
\begin{document}

\maketitle

\begin{abstract}
We study the problem of {\em distribution-independent} PAC learning of halfspaces in the presence of Massart noise. 
Specifically, we are given a set of labeled examples $(\bx, y)$ drawn 
from a distribution $\D$ on $\R^{d+1}$ such that the marginal distribution 
on the unlabeled points $\bx$ is arbitrary and the labels $y$ are generated by an unknown halfspace 
corrupted with Massart noise at noise rate $\eta<1/2$. The goal is to find 
a hypothesis $h$ that minimizes the misclassification error $\pr_{(\bx, y) \sim \D} \left[ h(\bx) \neq y \right]$. 

We give a $\poly\left(d, 1/\eps\right)$ time algorithm for this problem with misclassification error $\eta+\eps$. 
We also provide evidence that improving on the error guarantee of our algorithm
might be computationally hard. Prior to our work, no efficient weak (distribution-independent) learner 
was known in this model, even for the class of disjunctions. The existence of such an algorithm 
for halfspaces (or even disjunctions) has been posed as an open question in various works, 
starting with Sloan (1988), Cohen (1997), and was most recently highlighted in Avrim Blum's FOCS 2003 tutorial.
\end{abstract}

\setcounter{page}{0}

\thispagestyle{empty}

\newpage

\section{Introduction} \label{sec:intro}


Halfspaces, or Linear Threshold Functions (henceforth LTFs), 
are Boolean functions $f: \R^d \to \{ \pm 1\}$ of the form 
$f(\bx) = \sgn(\langle \bw, \bx \rangle- \theta)$, where $\bw \in \R^d$ is the weight vector and $\theta \in \R$ is the threshold. 
(The function $\sign: \R \to \{ \pm 1\}$ is defined as $\sgn(u)=1$ if $u \geq 0$ and $\sgn(u)=-1$ otherwise.)
The problem of learning an unknown halfspace is as old as the field of machine learning --- starting 
with Rosenblatt's Perceptron algorithm~\cite{Rosenblatt:58} --- and has arguably been the most influential problem 
in the development of the field. In the realizable setting, LTFs are known to be efficiently learnable 
in Valiant's distribution-independent PAC model~\cite{val84} via Linear Programming~\cite{MT:94}. 
In the presence of corrupted data, the situation is more subtle and crucially depends on the underlying noise model. 
In the agnostic model~\cite{Haussler:92, KSS:94} -- where an adversary is allowed to arbitrarily corrupt 
an arbitrary $\eta<1/2$ fraction of the labels -- even weak learning is known to be computationally 
intractable~\cite{GR:06, FGK+:06short, Daniely16}. On the other hand, 
in the presence of Random Classification Noise (RCN)~\cite{AL88} -- where each label is flipped
independently with probability exactly $\eta<1/2$ -- a polynomial time algorithm is known~\cite{BlumFKV96, BFK+:97}. 

\noindent In this work, we focus on learning halfspaces with Massart noise~\cite{Massart2006}:
\begin{definition}[Massart Noise Model] \label{def:massart-learning}
Let $\mathcal{C}$ be a class of Boolean functions over $X= \R^d$, $\D_{\bx}$ 
be an arbitrary distribution over $X$, and $0 \leq \eta <1/2$. 
Let $f$ be an unknown target function in $\mathcal{C}$. 
A {\em noisy example oracle}, $\mathrm{EX}^{\mathrm{Mas}}(f, \D_{\bx}, \eta)$,
works as follows: Each time $\mathrm{EX}^{\mathrm{Mas}}(f, \D_{\bx}, \eta)$ is invoked,
it returns a labeled example $(\bx, y)$, where $\bx \sim \D_{\bx}$, $y = f(\bx)$ with 
probability $1-\eta(\bx)$ and $y = -f(\bx)$ with probability $\eta(\bx)$, 
for an {\em unknown} parameter $\eta(\bx) \leq \eta$.
Let $\D$ denote the joint distribution on $(\bx, y)$ generated by the above oracle.
A learning algorithm is given i.i.d. samples from $\D$
and its goal is to output a hypothesis $h$ such that with high probability
the error $\pr_{(\bx, y) \sim \D} [h(\bx) \neq y]$ is small.
\end{definition}

\noindent An equivalent formulation of the Massart model~\cite{Sloan88, Sloan92} is the following: 
With probability $1-\eta$, we have that $y = f(\bx)$, and with probability $\eta$
the label $y$ is controlled by an adversary. Hence, the Massart model lies in between the RCN and the agnostic models.
(Note that the RCN model corresponds to the special case that 
$\eta(\bx) = \eta$ for all $\bx \in X$.)
It is well-known (see, e.g.,~\cite{Massart2006}) that \newblue{$\poly(d, 1/\eps)$} samples 
information-theoretically suffice to compute a hypothesis with misclassification error 
$\opt+ \eps$, where $\opt$ is the misclassification error of the optimal halfspace. 
Also note that $\opt \leq \eta$ by definition. The question is whether a polynomial
time algorithm exists.

The existence of an efficient distribution-independent learning algorithm for halfspaces
(or even disjunctions) in the Massart model has been posed as an open question 
in a number of works. In the first COLT conference~\cite{Sloan88} (see also~\cite{Sloan92}), 
Sloan defined the malicious misclassification noise model 
(an equivalent formulation of Massart noise, described above)
and asked  whether there exists an efficient learning algorithm for disjunctions 
in this model. About a decade later, Cohen~\cite{Cohen:97} 
asked the same question for the more general class of all LTFs.
The question remained open --- even for weak learning of disjunctions! --- and was highlighted 
in Avrim Blum's FOCS 2003 tutorial~\cite{Blum03}. Specifically, prior to this work, 
even the following very basic special case remained open:
\begin{center}
{\em Given labeled examples from an unknown disjunction, corrupted with $1\%$ Massart noise,\\
can we efficiently find a hypothesis that achieves misclassification error $49\%$?}
\end{center}
The reader is referred to slides 39-40 of Avrim Blum's FOCS'03 tutorial~\cite{Blum03},
where it is suggested that the above problem might be easier than agnostically learning
disjunctions. As a corollary of our main result (Theorem~\ref{thm:main-informal}), we
answer this question in the affirmative. In particular, we obtain an efficient algorithm
that achieves misclassification error arbitrarily close to $\eta$ for all LTFs.

\subsection{Our Results} \label{ssec:results}

The main result of this paper is the following:

\begin{theorem}[Main Result] \label{thm:main-informal}
There is an algorithm that for all $0 < \eta < 1/2$, on input  
a set of i.i.d. examples from a distribution 
$\D= \mathrm{EX}^{\mathrm{Mas}}(f, \D_{\bx}, \eta)$ on $\R^{d+1}$, 
where $f$ is an unknown halfspace on $\R^d$,
it runs in $\poly(d, b, 1/\eps)$ time, where $b$ is an upper bound 
on the bit complexity of the examples, and outputs a hypothesis $h$ 
that with high probability satisfies $\pr_{(\bx, y) \sim \D} [h(\bx) \neq y] \leq \eta +\eps$.
\end{theorem}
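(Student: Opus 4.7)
The plan is to build the hypothesis iteratively as a decision list of halfspaces, peeling off a significant fraction of the distribution at each step. We maintain a ``live'' set $S \subseteq \R^d$ (initially $\R^d$) of points for which no prediction has been fixed. The core subroutine, applied to the conditional distribution on $S$, returns a unit vector $\bw$ and a threshold $T \geq 0$ such that the margin region $R = \{\bx \in S : |\la \bw, \bx \ra| \geq T\}$ has mass at least $\tau \cdot \Pr[\bx \in S]$ for some $\tau = \poly(\epsilon, 1 - 2\eta)$, and the conditional error of $\sgn(\la \bw, \bx \ra)$ on $R$ is at most $\eta + \epsilon$. We commit to $\sgn(\la \bw, \bx \ra)$ on $R$ and recurse on $S \setminus R$. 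After $O(\log(1/\epsilon)/\tau)$ rounds the residual live mass is below $\epsilon$, which we may classify arbitrarily while losing only $\epsilon$ in the overall error; a final rescaling $\epsilon \mapsto \epsilon/C$ yields error $\eta + \epsilon$.

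For the subroutine, I would minimize a LeakyReLU convex surrogate loss $L(\bw) = \E\bigl[\LR_\lambda(-y \la \bw, \bx \ra) \bigm| \bx \in S\bigr]$ over the unit sphere $\|\bw\|_2 = 1$, where $\lambda \in (0,1)$ is a constant depending on $\eta$ chosen so that Massart-corrupted labels $y$ still point ``on average'' in the direction of $f(\bx)$ under this loss. Approximate minimization can be done in $\poly(d, b, 1/\epsilon)$ time by projected (stochastic) subgradient descent on an empirical version of $L$ drawn from $\poly(d, b, 1/\epsilon)$ samples; VC-dimension arguments for halfspaces control the uniform convergence of both $L$ and the $0/1$ error. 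Given the resulting $\bw$, the threshold $T$ is selected by a linear scan over candidate empirical margins that maximizes $|R|$ subject to the empirical error on $R$ being at most $\eta + \epsilon/2$.

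The main obstacle, and the crux of the analysis, is the structural lemma that an approximate LeakyReLU minimizer always yields a valid pair $(\bw, T)$. I would prove this by contraposition: assume that for every threshold the margin region either carries mass below $\tau$ or has conditional error above $\eta + \epsilon$. Using the Massart property $\eta(\bx) \leq \eta$ pointwise, one then shows that the directional derivative of $L$ at $\bw$ along the true weight vector $\bw^\star$ is strictly negative, contradicting the approximate first-order optimality of $\bw$. The delicate parts are (i) handling the nonsmooth kink of $\LR_\lambda$ at $0$, which is overcome by integrating the error-versus-margin trade-off across all thresholds so that the contribution of the measure-zero kink disappears, and (ii) calibrating $\lambda$, $\tau$, and the approximation tolerance of the convex solver so that the contradiction is quantitative. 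The Massart hypothesis enters here in an essential way: RCN would give this almost for free, while the fact that $\eta(\bx)$ is only pointwise bounded (not constant) forces the entire argument to go through the LeakyReLU geometry rather than through a direct bias correction.

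Putting the pieces together, the outer loop makes at most $\poly(1/\epsilon, 1/(1-2\eta))$ calls to the inner subroutine, each taking $\poly(d, b, 1/\epsilon)$ time; the bit complexity $b$ enters only through the ellipsoid/SGD step used to solve the convex program. The output is a depth-$\poly(1/\epsilon)$ decision list whose misclassification error is at most $\eta + \epsilon$ with high probability, establishing Theorem~\ref{thm:main-informal}.
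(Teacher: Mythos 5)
Your high-level blueprint---a decision list built by iteratively minimizing a LeakyReLU surrogate on the live region, thresholding to carve out a sub-region of nontrivial mass with error at most $\eta+\eps$, peeling it off, and recursing---is exactly the algorithm the paper uses, and your observation that the Massart condition $\eta(\bx)\leq\eta$ enters only to push the surrogate's optimum strictly negative matches Lemma~\ref{lm:opt_val}. The structural step that converts ``$L(\bw)$ sufficiently negative'' into ``a good threshold exists'' is Lemma~\ref{lm:structural}, which the paper proves by a direct averaging argument: integrate $\Ex[(\err(\bw,\bx)-\lambda+\zeta)\Ind_{|\langle\bw,\bx\rangle|\ge T}]$ over $T\in[0,1]$, show the integral is negative, and conclude some $T$ works with enough mass behind it. Your contraposition-via-directional-derivative framing aims at the same conclusion; in fact what you call ``integrating the error-versus-margin trade-off across all thresholds'' is essentially that averaging. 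But be aware that SGD certifies approximate optimality in function value, not first-order stationarity, and the problem is constrained ($\|\bw\|_2\le1$) and nonsmooth, so a stationarity-based contradiction needs care; the paper's value-based argument sidesteps this.

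The substantive gap is the general case without a margin. You assert a peeled-mass fraction $\tau=\poly(\eps,1-2\eta)$, $\poly(1/\eps,1/(1-2\eta))$ rounds, and claim the bit complexity $b$ enters ``only through the ellipsoid/SGD step.'' That is not correct. Without any margin or scale control, $|\langle\bw^\ast,\bx\rangle|$ may be arbitrarily small on almost all of $\D_\bx$, so $L(\bw^\ast)$ can be arbitrarily close to zero; then Lemma~\ref{lm:structural} only yields a region of vanishingly small mass and the outer loop does not terminate in polynomially many rounds. The paper's fix (Algorithm~\ref{alg:main-algorithm-general}) is the preprocessing of Lemma~\ref{lm:outlier}: remove $\Gamma^{-1}$-outliers \`a la Dunagan--Vempala, bring the residual distribution into isotropic position, and rescale so that the covariance has minimum eigenvalue $\Gamma=\tilde\Omega(1/(db))$. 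This weak margin is where $b$ genuinely enters, and the resulting $\tau$ and round count scale as $\poly(d,b,1/\eps)$---not $\poly(\eps,1-2\eta)$. As written, your argument is valid only in the large-margin regime (Theorem~\ref{thm:margin-case}), and the missing preprocessing is precisely what upgrades it to Theorem~\ref{thm:general-case}.
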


\new{
\noindent See Theorem~\ref{thm:general-case} for a more detailed formal statement.
For large-margin halfspaces, we obtain a slightly better error guarantee; 
see Theorem~\ref{thm:margin-case} and Remark~\ref{rem:margin-error}.}

\medskip
\noindent {\bf Discussion.}
We note that our algorithm is non-proper, i.e., the hypothesis $h$
itself is not a halfspace. The polynomial dependence on $b$ in the runtime cannot be removed, 
even in the noiseless case, unless one obtains strongly-polynomial algorithms 
for linear programming. Finally, we note that the misclassification error of 
$\eta+\eps$ translates to error \new{$2\eta+\eps$}
with respect to the target LTF.

\noindent Our algorithm gives error $\eta+\eps$, instead of the information-theoretic optimum 
of $\opt+\eps$. To complement our positive result,
we provide some evidence that improving on our $(\eta+\eps)$ error guarantee 
may be challenging. Roughly speaking, we show 
(see Theorems~\ref{thm:lb-surr} and~\ref{thm:lb-thresh}) 
that natural approaches --- involving convex surrogates and refinements thereof ---
inherently fail, \new{even under margin assumptions}. (See Section~\ref{ssec:techniques} for a discussion.)

\medskip
\noindent {\bf Broader Context.}
This work is part of the broader agenda of designing robust estimators
in the distribution-independent setting with respect to natural noise models. 
A recent line of work~\cite{KLS09, ABL17, DKKLMS16, LaiRV16, DKK+17, DKKLMS18-soda, 
DKS18a, KlivansKM18, DKS19, DKK+19-sever}
has given efficient robust estimators for a range of learning tasks (both supervised and unsupervised)
in the presence of a small constant fraction of adversarial corruptions. 
A limitation of these results is the assumption that the good data 
comes from a ``tame'' distribution, e.g., Gaussian or isotropic log-concave distribution. 
On the other hand, if {\em no} assumption is made on the good data 
and the noise remains fully adversarial, these problems
become computationally intractable~\cite{Bernholt, GR:06, Daniely16}. This suggests 
the following general question:
{\em Are there realistic noise models that allow for efficient algorithms 
without imposing (strong) assumptions on the good data?}
Conceptually, the algorithmic results of this paper could be viewed as an affirmative 
answer to this question for the problem of learning halfspaces.

\subsection{Technical Overview} \label{ssec:techniques}
In this section, we provide an outline of our approach 
and a comparison to previous techniques.
Since the distribution on the unlabeled data is arbitrary, we can assume w.l.o.g.
that the threshold $\theta=0$.

\paragraph{Massart Noise versus RCN.}
Random Classification Noise (RCN)~\cite{AL88} is the special case of Massart
noise where each label is flipped with probability {\em exactly} $\eta <1/2$.
At first glance, it might seem that Massart noise is easier to deal with computationally than RCN.
After all, in the Massart model we add {\em at most as much noise} as in the RCN model 
\new{with noise rate $\eta$}. 
It turns out that this intuition is fundamentally flawed. Roughly speaking, 
the ability of the Massart adversary to choose {\em whether} to perturb a given label 
and, if so, with what probability (which is {\em unknown} to the learner), 
makes the design of efficient algorithms in this model challenging. 
In particular, the well-known connection between learning with RCN
and the Statistical Query (SQ) model~\cite{Kearns93, Kearns:98} 
no longer holds, i.e., the property of being an SQ algorithm does {\em not} automatically suffice 
for noise-tolerant learning with Massart noise. We note that this connection with the SQ model
is leveraged in~\cite{BlumFKV96, BFK+:97} to obtain their polynomial time
algorithm for learning halfspaces with RCN.

\paragraph{Large Margin Halfspaces.}
To illustrate our approach, we start by describing
our learning algorithm for {\em $\gamma$-margin} halfspaces on the unit ball.
That is, we assume $| \langle \bw^{\ast}, \bx \rangle | \geq \gamma$ for every $\bx$ in the support, 
where $\bw^{\ast} \in \R^d$ with $\| \bw^{\ast} \|_2 = 1$ defines the target halfspace 
$h_{\bw^{\ast}}(\bx)  = \sgn(\langle \bw^{\ast}, \bx \rangle)$. Our goal is to design
a $\poly(d, 1/\eps, 1/\gamma)$ time learning algorithm in the presence of Massart noise.

In the RCN model, the large margin case is easy 
because the learning problem is essentially convex. That is, there is a convex surrogate
that allows us to formulate the problem 
as a convex program. We can use SGD to find a near-optimal solution to this convex program, 
which automatically gives a {\em strong proper} learner. This simple fact
does not appear explicitly in the literature, but follows easily from standard tools.
\cite{Bylander94} showed that a variant of the Perceptron algorithm
(which can be viewed as gradient descent on a particular convex objective)
learns $\gamma$-margin halfspaces in $\poly(d, 1/\eps, 1/\gamma)$ time. 
The algorithm in~\cite{Bylander94} requires an additional \new{anti-concentration} 
condition about the distribution, which is easy to remove. 
In Appendix~\ref{sec:rcn}, we show that a ``smoothed'' version of Bylander's 
objective suffices as a convex surrogate under only the margin assumption.

Roughly speaking, the reason that a convex surrogate works for RCN 
is that the expected effect of the noise on each label is known a priori. 
Unfortunately, this is not the case for Massart noise.
We show (Theorem~\ref{thm:lb-surr} in Appendix~\ref{sec:lb}) 
that no convex surrogate can lead to a {\em weak learner},
even under a margin assumption.
That is, if $\wh{\bw}$ is the minimizer of $G(\bw)=\E_{(\bx,y)\sim \D}[\phi(y\langle \bw, \bx\rangle)]$,
where $\phi$ can be any convex function, then the hypothesis $\sgn(\langle \wh{\bw}, \bx\rangle)$
is not even a weak learner. So, in sharp contrast with the RCN case, the problem is non-convex 
in this sense.

Our Massart learning algorithm for large margin halfspaces still uses
a convex surrogate, but in a qualitatively different way. 
Instead of attempting to solve the problem in one-shot, 
our algorithm adaptively applies a sequence of convex optimization problems 
to obtain an accurate solution in disjoint subsets of the space. 
Our iterative approach is motivated by a new structural lemma (Lemma~\ref{lm:structural}) 
establishing the following: {\em Even though minimizing a convex proxy does not lead to small 
misclassification error over the entire space, there exists a region with non-trivial probability
mass where it does.} Moreover, this region is efficiently identifiable by a simple thresholding rule.
Specifically, we show that there exists a threshold $T>0$ (which can be found algorithmically) 
such that the hypothesis $\sgn(\langle \wh{\bw}, \bx\rangle)$ has error bounded by $\eta+\eps$
in the region $R_T = \{\bx: |\langle \wh{\bw}, \bx \rangle |  \geq T \}$. Here $\wh{\bw}$ is any
near-optimal solution to an appropriate convex optimization problem, defined 
via a convex surrogate objective similar to the one used in~\cite{Bylander94}. 
We note that Lemma~\ref{lm:structural}
is the main technical novelty of this paper and motivates our algorithm.
Given Lemma~\ref{lm:structural}, in any iteration $i$
we can find the best threshold $T^{(i)}$ using samples, 
and obtain a learner with misclassification error $\eta+\eps$ in the corresponding region. 
Since each region has non-trivial mass, iterating this scheme a small number of times 
allows us to find a non-proper hypothesis (a decision-list of halfspaces) 
with misclassification error at most $\eta+\eps$ in the entire space.

The idea of iteratively optimizing a convex surrogate was used in~\cite{BlumFKV96} 
to learn halfspaces with RCN {\em without} a margin. 
Despite this similarity, we note that the algorithm of ~\cite{BlumFKV96} fails 
to even obtain a weak learner in the Massart model.
We point out two crucial technical differences: First, the iterative approach in~\cite{BlumFKV96} 
was needed to achieve polynomial running time. As mentioned already,
a convex proxy is guaranteed to converge to the true solution with RCN, 
but the convergence may be too slow (when the margin is tiny). 
In contrast, with Massart noise (even under a margin condition) 
convex surrogates cannot even give weak learning in the entire domain. 
Second, the algorithm of~\cite{BlumFKV96} used a fixed threshold in each iteration, 
equal to the margin parameter obtained after an appropriate 
pre-processing of the data (that is needed in order to ensure a weak margin property). 
In contrast, in our setting, we need to find 
an appropriate threshold $T^{(i)}$ in each iteration $i$, 
according to the criterion specified by our Lemma~\ref{lm:structural}. 

\paragraph{General Case.}
Our algorithm for the general case (in the absence of a margin)
is qualitatively similar to our algorithm for the large margin case, but the details are more elaborate.
We borrow an idea from~\cite{BlumFKV96} that in some sense allows
us to ``reduce'' the general case to the large margin case. Specifically,
~\cite{BlumFKV96} (see also~\cite{DV:04}) developed 
a pre-processing routine that slightly modifies the distribution 
on the unlabeled points and guarantees the following {\em weak margin} property:
After pre-processing, there exists an explicit margin parameter $\sigma = \Omega(1/\poly(d, b))$, 
such that any hyperplane through the origin has at least a non-trivial
mass of the distribution at distance at least $\sigma$ from it. 
Using this pre-processing step, we are able to adapt our algorithm 
from the previous subsection to work without margin assumptions in 
$\poly(d, b, 1/\eps)$ time. While our analysis is similar in spirit
to the case of large margin, we note that the margin property
obtained via the~\cite{BlumFKV96,DV:04} preprocessing step 
is (necessarily) weaker, hence additional careful analysis is required.


\paragraph{Lower Bounds Against Natural Approaches.}
We have already explained our Theorem~\ref{thm:lb-surr}, which shows
that using a convex surrogate over the entire space cannot not give a weak learner.
Our algorithm, however, can achieve error $\eta+\eps$ by iteratively optimizing
a specific convex surrogate in disjoint subsets of the domain. 
A natural question is whether one can obtain qualitatively better accuracy, e.g., 
$f(\opt)+\eps$, by using a {\em different} convex objective function in our iterative thresholding approach. 
We show (Theorem~\ref{thm:lb-thresh}) that such an improvement is not possible: Using
a different convex proxy cannot lead to error better than $(1-o(1)) \cdot \eta$.
It is a plausible conjecture that improving on the error guarantee of our algorithm is computationally hard.
We leave this as an intriguing open problem for future work.

\subsection{Prior and Related Work} \label{ssec:related}
Bylander~\cite{Bylander94} gave a polynomial time 
algorithm to learn large margin halfspaces with RCN (under an additional anti-concentration assumption). 
The work of Blum {\em et al.}~\cite{BlumFKV96, BFK+:97} 
gave the first polynomial time algorithm for distribution-independent 
learning of halfspaces with RCN without any margin assumptions. 
Soon thereafter,~\cite{Cohen:97} gave a polynomial-time proper learning algorithm for the problem. 
Subsequently, Dunagan and Vempala~\cite{DunaganV04} gave a rescaled perceptron algorithm for solving
linear programs, which translates to a significantly simpler and faster proper learning algorithm.

The term ``Massart noise'' was coined after~\cite{Massart2006}.
An equivalent version of the model was previously studied by 
Rivest and Sloan~\cite{Sloan88, Sloan92, RivestSloan:94, Sloan96},
and a very similar asymmetric random noise model goes back to Vapnik~\cite{Vapnik82}.
Prior to this work, essentially no efficient algorithms with non-trivial error guarantees 
were known in the distribution-free Massart noise model. It should be noted that 
polynomial time algorithms with error $\opt+\eps$ 
are known~\cite{AwasthiBHU15, ZhangLC17, YanZ17} when the marginal distribution 
on the unlabeled data is uniform on the unit sphere. 
For the case that the unlabeled data comes from an isotropic log-concave distribution,
\cite{AwasthiBHZ16} give a \newblue{$d^{2^{\poly(1/(1-2\eta))}}/\poly(\eps)$} sample and time algorithm.

\subsection{Preliminaries} \label{ssec:prelims}
For $n \in \Z_+$, we denote $[n] \eqdef \{1, \ldots, n\}$.
We will use small boldface characters for vectors
and we let $\e_i$ denote the $i$-th vector of an orthonormal basis. 

For $\bx \in \R^d$, 
and $i \in [d]$, $\bx_i$ denotes the $i$-th coordinate of $\bx$, and 
$\|\bx\|_2 \eqdef (\littlesum_{i=1}^d \bx_i^2)^{1/2}$ denotes the $\ell_2$-norm
of $\bx$. We will use $\langle \bx, \by \rangle$ for the inner product between $\bx, \by \in \R^d$. 
We will use $\E[X]$ for the expectation of random variable $X$ and $\pr[\mathcal{E}]$
for the probability of event $\mathcal{E}$. 

An origin-centered halfspace is a Boolean-valued function $h_{\bw}: \R^d \to \{\pm 1\}$ 
of the form $h_{\bw}(\bx) = \sgn \left(\langle \bw, \bx \rangle  \right)$,
where $\bw \in \R^d$. (Note that we may assume w.l.o.g. that $\|\bw\|_2 =1$.) 
We denote by $\LTF_{d}$ 
the class of all origin-centered halfspaces on $\R^d$.

We consider a classification problem where labeled examples $(\bx,y)$ are drawn i.i.d. 
from a distribution $\D$. We denote by $\D_\bx$ the marginal of $\D$ on $\bx$, and for any $\bx$ 
denote $\D_y(\bx)$ the distribution of $y$ conditional on $\bx$. 
Our goal is to find a hypothesis classifier $h$ with low misclassification error. 
We will denote the misclassification error 
of a hypothesis $h$ with respect to $\D$ by $\err_{0-1}^{\D}(h) = \pr_{(\bx, y) \sim \D}[h(\bx) \neq y]$.
Let $\opt = \min_{h \in \LTF_{d}} \err_{0-1}^{\D}(h)$ denote the optimal misclassification 
error of any halfspace, and $\bw^{\ast}$ be the normal vector to a halfspace $h_{\bw^\ast}$ that achieves this.

\section{Algorithm for Learning Halfspaces with Massart Noise} \label{sec:alg}


In this section, we present the main result of this paper, which is an efficient algorithm 
that achieves $\eta+\eps$ misclassification error for distribution-independent 
learning of halfspaces with Massart noise, where $\eta$ is an upper bound on the noise rate. 


\yellow{Our algorithm uses (stochastic) gradient descent on a convex proxy function $L(\bw)$ for the misclassification error 
to identify a region with small misclassification error. The loss function penalizes the points which are misclassified 
by the threshold function $h_{\bw}$, proportionally to the distance from the corresponding hyperplane, while it rewards the correctly classified points at a smaller rate. Directly optimizing this convex objective does not lead to a separator with low error, but guarantees that for a non-negligible fraction of the mass away from the separating hyperplane the misclassification error 
will be at most $\eta + \eps$. By classifying points in this region according to the hyperplane and recursively working on the remaining points, we obtain an improper learning algorithm that achieves $\eta + \eps$ error overall.

}

We now develop some necessary notation before proceeding with the description and analysis of our algorithm.

Our algorithm considers the following convex proxy for the misclassification error 
\green{as a function of the weight vector $\bw$}: 
$$L(\bw) = \Ex_{(\bx,y) \sim \D} [ \LR_\lambda(-y \langle \bw, \bx \rangle )] \;,$$
under the constraint $\ltwo{\bw} \le 1$, where
$\LR_\lambda(z) = \left\{
	\begin{array}{ll}
		(1-\lambda) z  & \mbox{if } z \geq 0 \\
		\lambda z & \mbox{if } z < 0
	\end{array}
\right.$
and $\lambda$ is the \emph{leakage} parameter, which we will set to be $\lambda \approx \eta$.

We define the per-point misclassification error and the error of the proxy function as
$\err(\bw,\bx) = \pr_{y \sim \D_y(\bx)} [ \bw(\bx) \neq y ]$ and
$\ell(\bw,\bx) = \Ex_{y \sim \D_y(\bx)} [ \LR_\lambda(-y \dotp{\bw}{\bx} )]$ \green{respectively}.

Notice that $\err_{0-1}^{\D}(h_\bw) = \Ex_{\bx \sim \D_{\bx}} [ \err(\bw,\bx) ]$ and $L(\bw) = \Ex_{\bx \sim \D_{\bx}} [ \ell(\bw,\bx) ]$.
Moreover, $\opt = \Ex_{\bx \sim \D_\bx} [ \err({\bw^{\ast}},\bx) ] = \Ex_{\bx \sim \D_\bx} [ \eta(\bx) ]$.

\paragraph{Relationship between proxy loss and misclassification error} We first relate 
the proxy loss and the misclassification error:
\begin{claim}\label{claim:relationship}
  For any $\bw,\bx$, we have that $\ell(\bw,\bx) = (\err(\bw,\bx) - \lambda) | \dotp{ \bw}{ \bx } |$.
\end{claim}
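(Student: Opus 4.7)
The plan is a direct case analysis on the label $y \in \{-1,+1\}$, separating whether $y$ agrees or disagrees with $h_{\bw}(\bx) = \sgn(\langle \bw, \bx\rangle)$. Write $z = \langle \bw, \bx\rangle$ and $s = \sgn(z)$, so $sz = |z|$. For the correctly-classified label $y = s$ we have $-yz = -|z| \le 0$, and the definition of $\LR_\lambda$ on the negative branch gives $\LR_\lambda(-yz) = \lambda(-|z|) = -\lambda|z|$. For the misclassified label $y = -s$ we have $-yz = |z| \ge 0$, and the positive branch gives $\LR_\lambda(-yz) = (1-\lambda)|z|$.

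Now take the expectation over $y \sim \D_y(\bx)$. By the definition of $\err(\bw,\bx)$, the misclassified label has probability $\err(\bw,\bx)$ and the correctly-classified label has probability $1-\err(\bw,\bx)$, so
\[
\ell(\bw,\bx) \;=\; \err(\bw,\bx)\cdot (1-\lambda)|z| \;+\; (1-\err(\bw,\bx))\cdot(-\lambda|z|).
\]
Expanding and collecting terms,
\[
\ell(\bw,\bx) \;=\; |z|\bigl[(1-\lambda)\err(\bw,\bx) - \lambda(1-\err(\bw,\bx))\bigr] \;=\; |z|\bigl(\err(\bw,\bx) - \lambda\bigr),
\]
which is exactly the claimed identity.

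The only minor subtlety is the degenerate case $\langle \bw, \bx\rangle = 0$: then $|z|=0$ makes the right-hand side vanish, and $\LR_\lambda(0) = 0$ regardless of which branch is taken in the definition (both branches agree at $z=0$), so the left-hand side also vanishes; hence the equality holds trivially. There is no real obstacle here — the statement is an algebraic identity that follows from unpacking the definitions of $\LR_\lambda$, $\ell$, and $\err$ and performing the two-case simplification above.
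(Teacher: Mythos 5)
Your proof is correct, and it is a slight streamlining of the paper's argument. The paper conditions on whether $\sgn(\langle \bw,\bx\rangle)$ agrees with $\sgn(\langle \bw^{\ast},\bx\rangle)$, then substitutes $\err(\bw,\bx)=\eta(\bx)$ or $\err(\bw,\bx)=1-\eta(\bx)$ in each case and computes the expectation over $y$ separately. You instead condition directly on whether $y$ agrees with $h_{\bw}(\bx)$ and keep $\err(\bw,\bx)$ symbolic throughout; the two case computations then collapse into one. This makes it transparent that the identity is a purely local algebraic fact about $\LR_\lambda$ and needs no reference to $\bw^{\ast}$, $\eta(\bx)$, or the Massart structure of $\D_y(\bx)$ — a useful conceptual point, though it buys nothing downstream since the lemma is used in the same way. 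You also explicitly handle the boundary $\langle\bw,\bx\rangle=0$, which the paper passes over silently (and harmlessly).
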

\begin{proof}
We consider two cases:
\begin{itemize}[leftmargin=*]
  \item {\bf Case $\sgn(\langle \bw, \bx \rangle) = \sgn(\langle \bw^{\ast}, \bx \rangle)$}: In this case, we have that 
  $\err(\bw,\bx) = \eta(\bx)$, while
  $\ell(\bw,\bx) = \eta(\bx) (1-\lambda) |\langle \bw, \bx \rangle| - (1-\eta(\bx)) \lambda |\langle \bw, \bx \rangle| = (\eta(\bx) - \lambda) |\langle \bw, \bx \rangle|$.
  
  \item {\bf Case $\sgn(\langle \bw, \bx \rangle) \neq \sgn(\langle \bw^{\ast}, \bx \rangle)$}: In this case, we have that 
  $\err(\bw,\bx) = 1-\eta(\bx)$, while
  $\ell(\bw,\bx) = (1-\eta(\bx)) (1-\lambda) |\langle \bw, \bx \rangle| - \eta(\bx) \lambda |\langle \bw, \bx \rangle| = (1 - \eta(\bx) - \lambda) |\langle \bw, \bx \rangle|$.
\end{itemize}
This completes the proof of Claim~\ref{claim:relationship}.
\end{proof}

\noindent Claim~\ref{claim:relationship} shows that minimizing 
$\Ex_{\bx \sim \D_{\bx}} \left[ \frac{ \ell(\bw,\bx) } { | \dotp{ \bw}{ \bx } | } \right]$ 
is equivalent to minimizing the misclassification error. Unfortunately, this objective is hard 
to minimize as it is non-convex, but one would hope that minimizing $L(\bw)$ instead 
may have a similar effect. As we show in Section~\ref{sec:lb}, this is not true because $| \dotp{ \bw}{ \bx } |$ 
might vary significantly across points, and in fact it is not possible to 
use a convex proxy that achieves bounded misclassification error directly.

Our algorithm circumvents this difficulty by approaching the problem indirectly 
to find a non-proper classifier. Specifically, \newblue{our} algorithm works in multiple rounds, 
where within each round only points with high value of $| \dotp{ \bw}{ \bx } |$ are considered. 
The intuition is based on the fact that the approximation of the convex proxy 
to the misclassification error is more accurate for those points that have comparable distance to the halfspace.

\noindent In Section~\ref{ssec:alg-margin}, we handle the large margin case
and in Section~\ref{thm:margin-case} we handle the general case.

\subsection{Warm-up: Learning Large Margin Halfspaces} \label{ssec:alg-margin}

We consider the case that there is no probability mass within distance $\gamma$ 
from the separating hyperplane $\langle \bw^{\ast} ,\bx \rangle = 0$, $\|\bw^{\ast} \|_2=1$. Formally,
assume that for every $\bx\sim \D_\bx$, $\|\bx\|_2 \le 1$ and that $|\langle \bw^{\ast}, \bx \rangle| \ge \gamma$. 

The pseudo-code of our algorithm is given in Algorithm~\ref{alg:main-algorithm}. 
Our algorithm returns a decision list $[(\bw^{(1)},T^{(1)}), (\bw^{(2)},T^{(2)}), \cdots]$ as output. 
To classify a point $\bx$ given the decision list, the first $i$ is identified such that $|\langle \bw^{(i)}, \bx \rangle| \ge T^{(i)}$ and $\sgn(\langle \bw^{(i)}, \bx \rangle)$ is returned. If no such $i$ exists, an arbitrary prediction is returned.

\begin{algorithm}[H]
  \caption{Main Algorithm (with margin)}
  \label{alg:main-algorithm}
  \begin{algorithmic}[1]
    \State Set $S^{(1)} = \R^d$, $\lambda = \eta + \eps$, $m = \tilde {O}(\frac 1 {\gamma^2 \eps^4})$.
    \State Set $i \leftarrow 1$.
    \State \label{step:Dx-emp}\green{Draw $O\left((1/\eps^2) \log(1/(\eps \gamma))\right)$ samples from $\D_{\bx}$ 
    to form an empirical distribution $\tilde{\D}_{\bx}$.}
    \While{$\pr_{\bx \sim \green{\tilde{\D}_\bx}} \left[ \bx \in S^{(i)} \right] \ge \green{\eps} $}
      \State Set $\D^{(i)} = \D|_{S^{(i)}}$, the distribution conditional on the unclassified points.
      \State Let $L^{(i)}(\bw) = \Ex_{(\bx,y) \sim \D^{(i)}} [ \LR_\lambda(-y \langle \bw, \bx \rangle )]$
   \State \label{step:sgd}  Run SGD on $L^{(i)}(\bw)$ for $\green{\tilde{O}}(1/(\gamma^2 \eps^2))$ iterations to get $\bw^{(i)}$ 
   with $\|\bw^{(i)}\|_2 = 1$ such that $L^{(i)}(\bw^{(i)}) \le \min_{\bw: \|\bw\|_2 \leq 1} L^{(i)}(\bw) + \gamma \eps / 2$. 
      \State  Draw $m$ samples from $\D^{(i)}$ to form an empirical distribution $\D_m^{(i)}$.
      \State \label{step:thresholdestimation} Find a threshold $T^{(i)}$ such that 
      $\pr_{(\bx,y) \sim \D_m^{(i)}} [ |\langle \bw^{(i)}, \bx \rangle| \ge T^{(i)} ] \ge \gamma \eps$ 
      and the empirical misclassification error, 
      $\pr_{(\bx,y) \sim \D_m^{(i)}} [h_{\bw^{(i)}}(\bx) \neq y \, \big| \, |\langle \bw^{(i)}, \bx \rangle| \ge T^{(i)}]$, is minimized.
      \State Update the unclassified region $S^{(i+1)} \leftarrow S^{(i)} \setminus \{\bx : |\langle \bw^{(i)}, \bx \rangle| \ge T^{(i)} \} $ and set $i \leftarrow i + 1$.
    \EndWhile
    \State Return the classifier $[(\bw^{(1)},T^{(1)}), (\bw^{(2)},T^{(2)}), \cdots]$
  \end{algorithmic}
\end{algorithm}

\noindent The main result of this section is the following:

\begin{theorem}\label{thm:margin-case}
Let $\D$ be a distribution on $\B_d \times \{\pm 1 \}$ such that $\D_{\bx}$ satisfies the $\gamma$-margin
property with respect to $\bw^{\ast}$ and $y$ is generated by $\sgn(\langle \bw^{\ast} ,\bx \rangle)$ 
corrupted with Massart noise at rate $\eta<1/2$. Algorithm~\ref{alg:main-algorithm} uses 
$\tilde O(1/(\gamma^3 \eps^5))$ samples from $\D$, runs in  $\poly(d, 1/\eps, 1/\gamma)$ time,
and returns, with probability $2/3$, a classifier $h$ with misclassification error $\err_{0-1}^{\D}(h) \leq \eta+\eps$. 
\end{theorem}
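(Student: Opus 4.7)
The plan is to prove correctness by induction over the iterations of Algorithm~\ref{alg:main-algorithm}, invoking the structural Lemma~\ref{lm:structural} at each round to show that Step~\ref{step:thresholdestimation} identifies a region of non-trivial mass on which the decision-list entry $(\bw^{(i)}, T^{(i)})$ misclassifies at most an $\eta+O(\eps)$ fraction of the points. Within iteration $i$, the SGD step is standard: $\LR_{\lambda}$ is $1$-Lipschitz and convex, the examples satisfy $\|\bx\|_2 \le 1$, and we optimize over the unit ball, so the stochastic subgradients have norm at most $1$, and $\tilde O\!\left(1/(\gamma\eps)^2\right)$ iterations suffice to obtain a $(\gamma\eps/2)$-approximate minimizer $\bw^{(i)}$ of $L^{(i)}(\bw)$, with failure probability that can be made $O(\eps\gamma)$.

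Next, I would apply the structural lemma to the conditional distribution $\D^{(i)}$. Since $\D^{(i)}$ still satisfies the $\gamma$-margin condition with respect to $\bw^\ast$ and still has Massart noise bounded by $\eta$, the lemma guarantees that there is some threshold $T^\ast > 0$ with $\Pr_{\bx \sim \D^{(i)}_{\bx}}[|\langle \bw^{(i)}, \bx\rangle| \ge T^\ast] \ge \gamma\eps$ and with conditional misclassification error of $h_{\bw^{(i)}}$ on $\{|\langle \bw^{(i)}, \bx\rangle| \ge T^\ast\}$ at most $\eta + \eps/2$. Hence the true optimum over thresholds is $\le \eta+\eps/2$, and it suffices to show that the empirical minimizer $T^{(i)}$ returned in Step~\ref{step:thresholdestimation} approximates it. Using $m = \tilde O(1/(\gamma^2 \eps^4))$ samples, I would apply uniform convergence over the one-dimensional class of threshold functions on $\langle \bw^{(i)}, \bx\rangle$ (VC dimension $O(1)$) together with a Bernstein-type multiplicative Chernoff bound to estimate misclassification rates within an additive $\eps/2$ error even conditioned on the event $\{|\langle \bw^{(i)}, \bx\rangle| \ge T^{(i)}\}$, which has probability only $\gamma\eps$; the factor $1/(\gamma\eps^4)$ in $m$ is exactly what this relative-error estimation requires.

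Third, I would bound the number of iterations. Since at round $i$ the region $\{\bx \in S^{(i)} : |\langle \bw^{(i)}, \bx\rangle| \ge T^{(i)}\}$ has $\D^{(i)}_{\bx}$-mass at least $\gamma\eps - O(\eps^2)$, we have $\Pr_{\bx \sim \D_{\bx}}[\bx \in S^{(i+1)}] \le (1-\Omega(\gamma\eps))\,\Pr_{\bx \sim \D_{\bx}}[\bx \in S^{(i)}]$, so the while loop terminates after $N = O\!\left((1/(\gamma\eps))\log(1/\eps)\right)$ iterations. A Hoeffding bound using the $O((1/\eps^2)\log(1/(\eps\gamma)))$ samples drawn in Step~\ref{step:Dx-emp} ensures that the stopping criterion based on $\tilde \D_{\bx}$ is accurate up to $\eps/2$ with high probability. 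Rescaling per-iteration failure probabilities to $1/(3N)$ and union-bounding over all $N$ rounds yields the claimed overall success probability of $2/3$; the total sample and time complexities multiply out to the $\tilde O(1/(\gamma^3\eps^5))$ and $\poly(d, 1/\eps, 1/\gamma)$ bounds in the statement.

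Putting it together: the returned decision list classifies each $\bx$ using the first $(\bw^{(i)}, T^{(i)})$ for which $|\langle \bw^{(i)}, \bx\rangle| \ge T^{(i)}$. By the per-iteration analysis, each such region contributes misclassification rate $\le \eta + O(\eps)$, while the residual unclassified mass is $\le \eps$ and can be assigned arbitrary labels at cost $\le \eps$. Summing gives total error $\eta + O(\eps)$, and rescaling $\eps$ by a constant yields the theorem. The main technical obstacle is the second step: using the structural lemma under the conditional distribution $\D^{(i)}$ and then transferring its guarantee from the \emph{true} optimal threshold $T^\ast$ to the \emph{empirical} best threshold $T^{(i)}$ in a region of mass only $\gamma\eps$. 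This forces a relative-error concentration argument (rather than a naive additive Hoeffding bound) and is what drives the unusual $1/(\gamma^2\eps^4)$ scaling of $m$.
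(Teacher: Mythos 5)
Your proof follows the paper's argument essentially step for step: run SGD to $\gamma\eps/2$ accuracy, invoke the structural lemma per round to obtain a good threshold on a $\gamma\eps$-mass region, bound the iteration count via geometric decay of residual mass, estimate thresholds with $\tilde O(1/(\gamma^2\eps^4))$ samples, and aggregate via a decision list. The only cosmetic differences are that you phrase the threshold estimation via a Bernstein-type relative-error bound while the paper uses DKW on the empirical CDF, and that you present the structural lemma as directly invoking the margin/Massart conditions --- strictly speaking its hypothesis is $L^{(i)}(\bw^{(i)})<0$, which the paper obtains by combining Lemma~\ref{lm:opt_val} (giving $\min L^{(i)}\le-\gamma\eps$) with the SGD guarantee.
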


\noindent Our analysis focuses on a single iteration of Algorithm~\ref{alg:main-algorithm}. 
We will show that a large fraction of the points is classified at every iteration within error $\eta+\eps$. 
To achieve this, we analyze the convex objective $L$. 
We start by showing that the optimal classifier $\bw^{\ast}$ obtains a \new{sufficiently small} negative objective value.

\begin{lemma}\label{lm:opt_val}
If $\lambda \ge \eta$, then $L(\bw^{\ast}) \le - \gamma (\lambda - \opt)$.
\end{lemma}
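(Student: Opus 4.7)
The plan is to directly apply Claim~\ref{claim:relationship} with $\bw = \bw^{\ast}$, then exploit two simple facts: (i) under Massart noise with rate bound $\eta$ and since $\bw^{\ast}$ is the target, $\err(\bw^{\ast},\bx) = \eta(\bx) \le \eta \le \lambda$ pointwise, and (ii) the margin hypothesis guarantees $|\langle \bw^{\ast},\bx\rangle| \ge \gamma$ for every $\bx$ in the support of $\D_{\bx}$. Integrating gives the claim.

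More concretely, first I would note that because $\bw^{\ast}$ defines the target halfspace $f$, the Massart noise model directly yields $\err(\bw^{\ast},\bx) = \eta(\bx)$ for every $\bx$, and hence by the preliminaries $\opt = \E_{\bx \sim \D_{\bx}}[\eta(\bx)]$. Plugging $\bw = \bw^{\ast}$ into Claim~\ref{claim:relationship}, I obtain
\[
\ell(\bw^{\ast},\bx) \;=\; \bigl(\eta(\bx) - \lambda\bigr)\,|\langle \bw^{\ast},\bx\rangle|.
\]

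Next, I would use the two sign/magnitude observations together. Since $\lambda \ge \eta \ge \eta(\bx)$, the scalar $\eta(\bx) - \lambda$ is non-positive; since $|\langle \bw^{\ast},\bx\rangle| \ge \gamma > 0$, multiplying a non-positive number by something at least $\gamma$ can only decrease (or preserve) its value. Therefore
\[
\ell(\bw^{\ast},\bx) \;\le\; \bigl(\eta(\bx) - \lambda\bigr)\,\gamma.
\]
Taking expectation over $\bx \sim \D_{\bx}$ and using $L(\bw^{\ast}) = \E_{\bx \sim \D_{\bx}}[\ell(\bw^{\ast},\bx)]$ together with $\E[\eta(\bx)] = \opt$ yields
\[
L(\bw^{\ast}) \;\le\; \gamma\bigl(\opt - \lambda\bigr) \;=\; -\gamma(\lambda - \opt),
\]
which is exactly the statement.

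There is essentially no technical obstacle here; the lemma is a one-line consequence of Claim~\ref{claim:relationship} once one notices that for $\bw^{\ast}$ the per-point error collapses to $\eta(\bx)$ and that the margin turns the $|\langle \bw^{\ast},\bx\rangle|$ factor into a useful lower bound precisely in the direction that exploits the non-positivity of $\eta(\bx) - \lambda$. The only thing worth double-checking is the inequality direction when multiplying a non-positive quantity by $|\langle \bw^{\ast},\bx\rangle| \ge \gamma$: a larger magnitude makes a non-positive quantity \emph{more} negative, so replacing $|\langle \bw^{\ast},\bx\rangle|$ by its lower bound $\gamma$ gives an \emph{upper} bound on $\ell(\bw^{\ast},\bx)$, which is the direction we want.
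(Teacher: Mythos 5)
Your proof is correct and is essentially identical to the paper's: both apply Claim~\ref{claim:relationship} at $\bw = \bw^{\ast}$ to get $\ell(\bw^{\ast},\bx) = (\eta(\bx) - \lambda)|\langle\bw^{\ast},\bx\rangle|$, then use $\eta(\bx) \le \eta \le \lambda$ together with the margin bound $|\langle\bw^{\ast},\bx\rangle| \ge \gamma$ to conclude $\ell(\bw^{\ast},\bx) \le -\gamma(\lambda - \eta(\bx))$, and finish by taking expectation. Your extra care about the inequality direction (multiplying a non-positive quantity by a lower-bounded magnitude) matches the paper's brief justification.
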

\begin{proof}
For any fixed $\bx$, {using Claim \ref{claim:relationship}}, we have that $$\ell(\bw^{\ast},\bx) = (\err(\bw^{\ast},\bx) - \lambda) | \langle \bw^{\ast}, \bx \rangle | = (\eta(\bx) - \lambda) |\langle \bw^{\ast}, \bx \rangle| \le - \gamma (\lambda - \eta(\bx)) \;,$$
since $|\langle \bw^{\ast}, \bx \rangle| \ge \gamma$ and $\eta(\bx) - \lambda \le 0$. 
Taking expectation over $\bx \sim \D_{\bx}$, the statement follows.
\end{proof}

\yellow{Lemma~\ref{lm:opt_val} is the only place where the Massart noise assumption is used in our approach
and establishes that points with sufficiently \new{small} negative value exist.
As we will show, any weight vector $\bw$ with this property can be found with few samples 
and must accurately classify some region of non-negligible mass away from it (Lemma~\ref{lm:structural}).}

We now argue that we can use stochastic gradient descent (SGD) to efficiently identify 
a point $\bw$ that achieves comparably small objective value to the guarantee 
of Lemma~\ref{lm:opt_val}. We use the following standard property of SGD:

\begin{lemma}[see, e.g., Theorem 3.4.11 in~\cite{Duchi16}]\label{lm:sgd_guarantees}
Let $L$ be any convex function. Consider the (projected) SGD iteration that is initialized at 
$\bw^{(0)} = \bold 0$ and for every step computes
$$\bw^{(t+\frac 1 2)} = \bw^{(t)} - \rho \bv^{(t)} \quad \text{ and } \quad 
\bw^{(t+1)} = \arg\min_{\bw: \ltwo{\bw} \le 1} \ltwo{ \bw - \bw^{(t+\frac 1 2)} } \;,$$ 
where $\bv^{(t)}$ is a stochastic gradient such that for all steps $\Ex[ \bv^{(t)} | \bw^{(t)} ] \in \partial L(\bw^{(t)}) $ 
and $\ltwo{\bv^{(t)}} \le 1$. 
Assume that SGD is run for $T$ iterations with step size $\rho =  \frac{1}{\sqrt{T}}$ and let $\bar \bw = \frac 1 T \sum_{t=1}^T \bw^{(t)}$. Then, for any $\eps, \delta > 0$, after $T  = \Omega(\log(1/\delta)/\eps^2)$ iterations with probability 
 with probability at least $1-\delta$ we have that
$L(\bar \bw) \le \min_{\bw: \ltwo{\bw} \le 1} L(\bw) + \eps$.
\end{lemma}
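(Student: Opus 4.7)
The plan is to follow the textbook analysis of projected stochastic subgradient descent on a Lipschitz convex function, then to upgrade it from an expectation bound to a high-probability bound via Azuma--Hoeffding; this last step is where the $\log(1/\delta)$ factor in the iteration count comes from. Since every stochastic subgradient $\bv^{(t)}$ has $\ltwo{\bv^{(t)}} \le 1$ almost surely, the true subgradients $\bg^{(t)} := \Ex[\bv^{(t)}\mid \bw^{(t)}] \in \partial L(\bw^{(t)})$ also have norm at most $1$, so $L$ is $1$-Lipschitz on the unit ball. Fix any minimizer $\bw_\star \in \arg\min_{\ltwo{\bw}\le 1} L(\bw)$.

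First, I would establish the standard one-step inequality by noting that Euclidean projection onto the unit ball is non-expansive, so
$$\ltwo{\bw^{(t+1)}-\bw_\star}^2 \le \ltwo{\bw^{(t+\frac{1}{2})}-\bw_\star}^2 = \ltwo{\bw^{(t)}-\bw_\star}^2 - 2\rho\dotp{\bv^{(t)}}{\bw^{(t)}-\bw_\star} + \rho^2\ltwo{\bv^{(t)}}^2.$$
Rearranging, using $\ltwo{\bv^{(t)}}\le 1$, summing $t=1,\dots,T$, dropping the non-negative $\ltwo{\bw^{(T+1)}-\bw_\star}^2$, and bounding $\ltwo{\bw^{(1)}-\bw_\star}^2\le 4$ gives the telescoped inequality
$$\sum_{t=1}^T \dotp{\bv^{(t)}}{\bw^{(t)}-\bw_\star} \le \frac{2}{\rho} + \frac{\rho T}{2}.$$
With the prescribed $\rho = 1/\sqrt{T}$, the right-hand side is $O(\sqrt{T})$.

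Next I would pass from stochastic subgradients to function values. By convexity of $L$, $\dotp{\bg^{(t)}}{\bw^{(t)}-\bw_\star} \ge L(\bw^{(t)}) - L(\bw_\star)$, so writing
$$\dotp{\bv^{(t)}}{\bw^{(t)}-\bw_\star} = \dotp{\bg^{(t)}}{\bw^{(t)}-\bw_\star} + \dotp{\bv^{(t)}-\bg^{(t)}}{\bw^{(t)}-\bw_\star}$$
and substituting into the telescoped bound, then dividing by $T$, yields
$$\frac{1}{T}\sum_{t=1}^T (L(\bw^{(t)}) - L(\bw_\star)) \;\le\; \frac{2}{\rho T} + \frac{\rho}{2} \;-\; \frac{1}{T}\sum_{t=1}^T \dotp{\bv^{(t)}-\bg^{(t)}}{\bw^{(t)}-\bw_\star}.$$
By Jensen's inequality applied to the convex $L$ and to $\bar\bw = \frac{1}{T}\sum_t \bw^{(t)}$, the left-hand side is at least $L(\bar\bw) - L(\bw_\star)$, which already delivers an $O(1/\sqrt{T})$ bound in expectation.

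The main remaining obstacle, and the reason a $\log(1/\delta)$ overhead appears, is to control the martingale $M_T := \sum_{t=1}^T \dotp{\bv^{(t)}-\bg^{(t)}}{\bw^{(t)}-\bw_\star}$ with high probability. Conditional on the history through step $t-1$, $\bw^{(t)}$ is measurable and $\Ex[\bv^{(t)} - \bg^{(t)}\mid \bw^{(t)}] = 0$ by definition of $\bg^{(t)}$, and the increment is bounded in absolute value by $\ltwo{\bv^{(t)}-\bg^{(t)}}\cdot\ltwo{\bw^{(t)}-\bw_\star}\le 4$. Azuma--Hoeffding therefore yields $|M_T| = O(\sqrt{T\log(1/\delta)})$ with probability at least $1-\delta$. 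Combining this with the deterministic $O(\sqrt{T})$ bound and dividing by $T$ gives $L(\bar\bw) - L(\bw_\star) \le O(1/\sqrt{T}) + O(\sqrt{\log(1/\delta)/T})$ with probability $1-\delta$, so $T = \Omega(\log(1/\delta)/\eps^2)$ iterations suffice to force both terms below $\eps/2$, which is the claimed bound.
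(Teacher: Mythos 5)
The paper does not prove this lemma --- it cites Theorem 3.4.11 of Duchi's lecture notes --- so there is no in-paper argument to compare against. Your proof is correct and is precisely the standard projected-subgradient analysis (non-expansiveness of projection, telescoping, convexity to pass from subgradient inner products to function-value gaps, Jensen for the averaged iterate, Azuma--Hoeffding on the bounded martingale $\sum_t \langle \bv^{(t)}-\bg^{(t)}, \bw^{(t)}-\bw_\star\rangle$ to upgrade from expectation to probability $1-\delta$), which is exactly what the cited reference gives; the only cosmetic inconsistency is that the iteration is initialized at $\bw^{(0)}=\mathbf{0}$ while your sums start at $t=1$, but this does not affect anything.
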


By Lemma \ref{lm:opt_val}, we know that $\min_{\bw: \|\bw\|_2 \le 1 } L(\bw) \le - \gamma (\lambda - \opt)$. 
By Lemma~\ref{lm:sgd_guarantees}, it follows that by running SGD on $L(\bw)$ with projection to the unit $\ell_2$-ball for 
$O\left(\log(1/\delta) / (\gamma^2(\lambda - \opt)^2) \right)$ steps, 
we find a $\bw$ such that $L(\bw) \le - \gamma (\lambda - \opt)/2$ \green{with probability at least $1-\delta$}. 

\noindent 

Note that we can assume 
without loss of generality that $\|\bw\|_2=1$, as increasing the magnitude of $\bw$ only decreases the objective value.

\medskip

We now consider the misclassification error of the halfspace $h_{\bw}$ conditional 
on the points that are further than some distance $T$ from the separating hyperplane. 
We claim that there exists a threshold $T>0$ where the restriction has non-trivial mass 
and the conditional misclassification error is small:

\begin{lemma}\label{lm:structural}
Consider a vector $\bw$ with $L(\bw) < 0$. There exists a threshold $T \ge 0$ such that 
(i) $\pr_{(\bx,y) \sim \D} [ |\langle \bw, \bx \rangle| \ge T ] \ge \frac{|L(\bw)|}{2 \lambda},$ and
(ii) $\pr_{(\bx,y) \sim \D} [h_\bw(\bx) \neq y \, \big| \, |\langle \bw, \bx \rangle| \ge T ] \le \lambda - \frac{|L(\bw)|}{2}.$
\end{lemma}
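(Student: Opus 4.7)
The plan is to prove Lemma~\ref{lm:structural} via a layer-cake (thresholding) argument built on Claim~\ref{claim:relationship}. I would write $u(\bx) \eqdef |\langle \bw, \bx \rangle| \in [0,1]$ and introduce the two cumulative quantities
\[
  g(T) \eqdef \pr_{\bx \sim \D_\bx}[\,u(\bx) \ge T\,], \qquad
  A(T) \eqdef \Ex_{\bx \sim \D_\bx}\!\big[(\err(\bw,\bx) - \lambda)\,\Ind\{u(\bx) \ge T\}\big].
\]
By Claim~\ref{claim:relationship}, $\ell(\bw,\bx) = (\err(\bw,\bx)-\lambda)\,u(\bx)$. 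Combining this with the layer-cake identity $u(\bx) = \int_0^\infty \Ind\{u(\bx) \ge T\}\,dT$ and Fubini yields the key representation $L(\bw) = \int_0^\infty A(T)\,dT$. In these terms, conditions~(i) and~(ii) translate to $g(T) \ge |L(\bw)|/(2\lambda)$ and $A(T) \le -|L(\bw)|\,g(T)/2$, respectively.

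Setting $\alpha \eqdef |L(\bw)| > 0$, I would then argue by contradiction, assuming that no $T \ge 0$ satisfies both~(i) and~(ii) simultaneously. Define the cutoff $T^{\ast} \eqdef \sup\{T \ge 0 : g(T) \ge \alpha/(2\lambda)\}$; note $T^{\ast} \in [0,1]$ since $\alpha = -L(\bw) \le \lambda\,\Ex[u] \le \lambda$ (so $\alpha/(2\lambda) \le 1/2$) and $g$ is supported on $[0,1]$. For $T \in [0, T^{\ast})$, condition~(i) holds so~(ii) must fail, giving the strict pointwise bound $A(T) > -\alpha\,g(T)/2$. For $T > T^{\ast}$, condition~(i) fails so $g(T) < \alpha/(2\lambda)$, while the trivial inequality $A(T) \ge -\lambda\,g(T)$ (from $\err(\bw,\bx) \ge 0$) remains available. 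Splitting the layer-cake integral at $T^{\ast}$ and plugging in these bounds, I would derive
\[
  L(\bw) \;>\; -\tfrac{\alpha}{2}\!\int_0^{T^{\ast}}\!g(T)\,dT \;-\; \lambda\!\int_{T^{\ast}}^{\infty}\!g(T)\,dT.
\]
Using $g \le 1$ on $[0,T^{\ast}]$ and $g(T) < \alpha/(2\lambda)$ on $(T^{\ast},1]$, the first integral is at most $T^{\ast}$ and the second at most $(1-T^{\ast})\,\alpha/(2\lambda)$. These contributions collapse to $L(\bw) > -\alpha/2$, which directly contradicts $L(\bw) = -\alpha$ since $\alpha > 0$.

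The subtle point I expect to worry about is ensuring that the pointwise strict inequality $A(T) > -\alpha\,g(T)/2$ really yields a strict integral inequality. When $T^{\ast} > 0$ this is automatic, since $g > 0$ on $[0, T^{\ast})$ means the strict inequality holds on a set of positive measure. The boundary case $T^{\ast} = 0$ (where most of the mass hugs the hyperplane) must be handled separately but analogously: there $g(T) < \alpha/(2\lambda)$ for every $T > 0$ forces $\int_0^\infty g(T)\,dT < \alpha/(2\lambda)$, whence $L(\bw) \ge -\lambda\!\int_0^\infty g(T)\,dT > -\alpha/2$, again contradicting $L(\bw) = -\alpha$. Potential atoms of $g$ at $T^{\ast}$ are harmless for the integral-based argument.
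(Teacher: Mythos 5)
Your proof is correct and, like the paper's, is driven by the layer-cake identity $L(\bw) = \int_0^\infty A(T)\,dT$ obtained from Claim~\ref{claim:relationship}, but you organize the argument differently. The paper proceeds constructively: it absorbs the shift $\zeta = |L(\bw)|/2$ into the integrand, defining $F(T) = \E_{\bx\sim\D_\bx}[(\err(\bw,\bx) - \lambda + \zeta)\,\Ind_{|\langle\bw,\bx\rangle|\ge T}]$, uses $\int_0^1 F(T)\,dT \le L(\bw)/2 < 0$ to extract the \emph{minimum} $\bar T$ with $F(\bar T) \le 0$ (this is exactly condition~(ii)), and then leverages minimality ($\int_0^{\bar T} F \ge 0$) together with the crude bound $F(T) \ge -\lambda g(T)$ to obtain condition~(i). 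You instead argue by contradiction, pivoting on the \emph{maximum} threshold $T^\ast$ up to which the mass condition~(i) can still hold: below $T^\ast$ you invoke the assumed failure of~(ii), above it the trivial bound $A(T) \ge -\lambda g(T)$, and the two pieces together force $L(\bw) > -|L(\bw)|/2$, contradicting $L(\bw) = -|L(\bw)|$. The two routes are essentially dual (the paper pivots on where~(ii) first holds; you pivot on where~(i) last survives, using that $g$ is non-increasing so that $\{T: g(T)\ge |L(\bw)|/(2\lambda)\}$ is an initial interval), they yield the same conclusion at comparable length, and the paper's constructive version is arguably slightly more transparent about which $T$ does the job; your handling of the strictness of the integral inequality and of the boundary case $T^\ast = 0$ is careful and sound.
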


\begin{proof}
We will show there is a $T \ge 0$ such that 
$\pr_{(\bx,y) \sim \D} [h_\bw(\bx) \neq y \, \big| \, |\langle \bw, \bx \rangle| \ge T ] \le \lambda - \zeta$,
where $\zeta \eqdef |L(\bw)|/2$, or equivalently, 
$\Ex_{{\bx \sim \D_\bx}}[
  (\err(\bw,\bx) - \lambda  + \zeta)
  \Ind_{|\langle \bw, \bx \rangle| \ge T}] \le 0.$

\noindent For a $T$ drawn uniformly at random in $[0,1]$, we have that:
\begin{align*} 
\int_0^1 \Ex_{{\bx \sim \D_\bx}}[
  (\err(\bw,\bx) - \lambda +  \zeta) 
  1_{|\langle \bw, \bx \rangle| \ge T}] dT &= \E_{{\bx \sim \D_\bx}}[
  (\err(\bw,\bx) - \lambda) |\langle \bw, \bx \rangle|] + \zeta \E_{{\bx \sim \D_\bx}}[
  |\langle \bw, \bx \rangle|]\\
  &\le \E_{{\bx \sim \D_\bx}}[
  \ell(\bw,\bx)] + \zeta = L(\bw)  + \zeta = L(\bw) / 2 < 0 \;,
\end{align*}
\new{where the first inequality uses Claim~\ref{claim:relationship}.}
Thus, there exists a $\bar T$ such that 
$$\E_{{\bx \sim \D_\bx}}[(\err(\bw,\bx) - \lambda + \zeta) \Ind_{|\langle \bw, \bx \rangle| \ge \bar T}] \le 0 \;.$$ 
Consider the minimum such $\bar T$. Then we have
$$\int_{\bar T}^1 \E_{{\bx \sim \D_\bx}}[
    (\err(\bw,\bx) - \lambda + \zeta) 
    \Ind_{|\langle \bw, \bx \rangle| \ge T}] dT \ge - \lambda \cdot \pr_{\newblue{(\bx,y) \sim \D}}[ |\langle \bw, \bx \rangle| \ge \bar T ] \;.$$
\green{By definition of $\bar T$, it must be the case that 
$$\int_0^{\bar T} \E_{{\bx \sim \D_\bx}}[
    (\err(\bw,\bx) - \lambda + \zeta) 
    \Ind_{|\langle \bw, \bx \rangle| \ge T}] dT \ge 0 \;.$$
Therefore, 
$$\frac{L(\bw)}{2}\ge\int_{\bar T}^1 \E_{{\bx \sim \D_\bx}}[
    (\err(\bw,\bx) - \lambda + \zeta) 
    \Ind_{|\langle \bw, \bx \rangle| \ge T}] dT \ge - \lambda \cdot \pr_{\newblue{(\bx,y) \sim \D}}[ |\langle \bw, \bx \rangle| \ge \bar T ] \;,$$ 
which implies that $\pr_{\newblue{(\bx,y) \sim \D}} [|\langle \bw, \bx \rangle| \ge \bar T ] \ge \frac{|L(\bw)|}{2\lambda} $. This completes the proof of Lemma~\ref{lm:structural}.
}

\end{proof}

Even though minimizing the convex proxy $L$ does not lead to low misclassification error overall, 
Lemma~\ref{lm:structural} shows that there exists a region of non-trivial mass where it does. 
This region is identifiable by a simple threshold rule.
We are now ready to prove Theorem \ref{thm:margin-case}.

\begin{proof}[Proof of Theorem \ref{thm:margin-case}]
We consider the steps of Algorithm~\ref{alg:main-algorithm} in each iteration of the while loop. 
At iteration $i$, we consider a distribution $\D^{(i)}$ consisting only of points not handled in previous iterations. 

We start by noting that with high probability the total number of iterations is \newblue{$\tilde{O}(1/(\gamma \epsilon))$}.
This can be seen as follows: The empirical probability mass under $\D_m^{(i)}$ 
of the region $\{\bx : |\langle \bw^{(i)}, \bx \rangle| \ge T^{(i)} \}$ removed from $S^{(i)}$ 
to obtain $S^{(i+1)}$ is at least $\gamma \eps$ (Step~\ref{step:thresholdestimation}).
Since  $m = \tilde {O}(1/(\gamma^2 \eps^4))$, the DKW inequality~\cite{DKW56} implies that the true 
probability mass of this region is at least $\gamma \eps/2$ with high probability. 
By a union bound over $i \leq K = \Theta(\log(1/\eps)/(\eps\gamma))$, it follows
that with high probability we have that $\Pr_{\D_{\bx}}[S^{(i+1)}] \leq (1-\gamma \eps/2)^i$ for all $i \in [K]$.
After $K$ iterations, we will have that $\Pr_{\D_{\bx}}[S^{(i+1)}] \leq \eps/3$.
Step~\ref{step:Dx-emp} guarantees that the mass of $S^{(i)}$ under $\tilde{\D}_{\bx}$
is within an additive $\eps/3$ of its mass under $\D_{\bx}$, for $i \in [K]$. This implies that
the loop terminates after at most $K$ iterations with high probability.

By Lemma~\ref{lm:opt_val} and the fact that every $\D^{(i)}$ has margin $\gamma$, it follows 
that the minimizer of the loss $L^{(i)}$ has value less than 
$-\gamma (\lambda - \opt^{(i)}) \le - \gamma \eps$, as $\opt^{(i)} \le \eta$ and $\lambda = \eta + \eps$.
By the guarantees of Lemma~\ref{lm:sgd_guarantees}, running SGD in line~\ref{step:sgd} on $L^{(i)}( \cdot )$ with projection 
to the unit $\ell_2$-ball for $O\left(\log(1/\delta)/(\gamma^2 \eps^2)\right)$ steps, 
we obtain a $\bw^{(i)}$ such that, with probability at least $1-\delta$, it holds 
$L^{(i)}(\bw^{(i)}) \le - \gamma \eps/2$ and $\|\bw^{(i)}\|_2 = 1$. 
Here $\delta>0$ is a parameter that is selected so that the following claim 
holds: With probability at least $9/10$, for all iterations $i$ of the while loop
we have that $L^{(i)}(\bw^{(i)}) \le - \gamma \eps/2$. Since the total number of iterations
is \newblue{$\tilde{O}(1/(\gamma \epsilon))$}, setting $\delta$
to $\tilde{\Omega}(\eps \gamma)$ and applying a union bound over all iterations gives the previous claim. 
Therefore, the total number of SGD steps per iteration is $\tilde{O}(1 / (\gamma^2\eps^2))$. 
For a given iteration of the while loop, running SGD requires $\tilde{O}(1 / (\gamma^2\eps^2))$ 
samples from $\D^{(i)}$ which translate to at most $\tilde{O}\left(1/(\gamma^2 \eps^3)\right)$ samples from $\D$, 
as $\pr_{\bx \sim \D_\bx} \left[ \bx \in S^{(i)} \right]  \geq 2\eps/3$.  

Lemma~\ref{lm:structural} implies that there exists $T \ge 0$ such that:
\begin{enumerate}
\item[(a)] $\pr_{(\bx,y) \sim \D^{(i)}} [ |\langle \bw, \bx \rangle| \ge T ] \ge \gamma \eps,$ and
\item[(b)] $\pr_{(\bx,y) \sim \D^{(i)}} [h_\bw(\bx) \neq y \, \big| \, |\langle \bw, \bx \rangle| \ge T ] \le \eta + \eps.$
\end{enumerate}
Line~\ref{step:thresholdestimation} of  Algorithm~\ref{alg:main-algorithm} estimates the threshold using samples. By the 
DKW inequality~\cite{DKW56}, we know that with $m = \tilde {O}(1/(\gamma^2 \eps^4))$ samples we can estimate the CDF 
within error $\gamma \eps^2$ with probability $1-\poly(\eps,\gamma)$. This suffices to estimate the probability 
mass of the region within additive $\gamma \eps^2$ and the misclassification error within $\eps/3$. 
This is satisfied for all iterations with constant probability.

In summary, with high constant success probability, Algorithm~\ref{alg:main-algorithm} 
runs for \newblue{$\tilde{O}(1/(\gamma \epsilon))$} iterations and draws 
$\tilde {O}(1/(\gamma^2 \eps^4))$ samples per round for a total of 
$\tilde {O}(1/(\gamma^3 \eps^5))$ samples. As each iteration runs in polynomial time, 
the total running time follows.

When the while loop terminates, we have that $\pr_{\bx \sim \D_\bx} [ \bx \in S^{(i)}] \le 4\eps/3$, 
i.e., we will have accounted for at least a $(1-4\epsilon/3)$-fraction of the total probability mass. 
Since our algorithm achieves misclassification error at most $\eta + 4\eps/3$ in all the regions we accounted for, 
its total misclassification error is at most $\eta + 8\eps/3$. 
Rescaling $\eps$ by a constant factor gives Theorem~\ref{thm:margin-case}.
\end{proof}

\begin{remark}\label{rem:margin-error}
 If the value of $\opt$ is smaller than $\eta - \xi$ for some value $\xi > 0$, 
 Algorithm~\ref{alg:main-algorithm} gets misclassification error less 
 than $\eta - \Omega( \gamma^2 \xi^2 )$ when run for $\eps = O(\gamma^2 \xi^2)$. 
This is because, in the first iteration, $L^{(1)}(\bw^{(1)}) \le - \gamma (\lambda - \opt) / 2 \le - \gamma \xi / 2$, 
which implies, by Lemma~\ref{lm:structural}, that the obtained error in $S^{(1)}$ 
is at most $\lambda - \gamma \xi / 4$. The misclassification error in the remaining regions 
is at most $\lambda + \eps$, and region $S^{(1)}$ has probability mass at least $\gamma \xi / 4$. 
Thus, the total misclassification error is at most 
$\lambda + \eps - \gamma^2 \xi^2 / 16 = \eta - \Omega( \gamma^2 \xi^2 )$, 
when run for $\eps = O(\gamma^2 \xi^2)$.
\end{remark}

\subsection{The General Case} \label{ssec:alg-general}

In the general case, we assume that $\D_\bx$ is an arbitrary distribution supported on $b$-bit integers. 
While such a distribution might have exponentially small margin in the dimension $d$ (or even $0$), 
we will preprocess the distribution to ensure a margin condition by removing outliers.

We will require the following notion of an outlier:

\begin{definition}[\cite{DV:04}]
We call a point $\bx$ in the support of a distribution $\D_\bx$ a $\beta$-outlier, 
if there exists a vector $\bw \in \R^d$ such that
$\dotp{\bw}{\bx}^2  \geq \beta \Ex_{\bx \sim \D_\bx}[ \dotp{\bw}{\bx}^2 ].$
\end{definition}

We will use Theorem~3 of~\cite{DV:04}, 
which shows that any distribution supported on $b$-bit integers can be efficiently 
preprocessed using samples so that no large outliers exist.

\begin{lemma}[Rephrasing of Theorem 3 of \cite{DV:04}]\label{lm:outlier}
Using $m = \tilde{O}(d^2 b)$ samples from $\D_\bx$, one can identify with high probability an ellipsoid $E$ such that $\pr_{\bx \sim \D_\bx} [ \bx \in E ] \ge \frac 1 2$ and $\D_\bx |_E$ has no $\Gamma^{-1} = \tilde{O}(d b)$-outliers.
\end{lemma}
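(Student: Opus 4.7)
The plan is to invoke the classical outlier-removal scheme of Dunagan--Vempala: iteratively peel off points that witness a large outlier direction with respect to the current sample covariance, and let $E$ be the residual ellipsoid. Concretely, I would first draw $m = \tilde{O}(d^2 b)$ samples to form the empirical distribution $\wh{\D}$, and reduce the problem to producing an ellipsoid $E$ with $\wh{\D}[E] \ge 1/2$ such that $\wh{\D}|_E$ has no $\Gamma^{-1}$-outliers; a matrix-concentration argument on samples with $b$-bit coordinates then transfers this property to $\D_\bx$ up to constants. For the empirical step, maintain a set $S$ initialized to the samples and the matrix $A_S = \E_{\bx \sim \wh{\D}_S}[\bx \bx^T]$. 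While there exists $\bx \in S$ with $\bx^T A_S^{-1} \bx > \Gamma^{-1}$ (equivalently, a unit $\bw$ with $\langle \bw,\bx\rangle^2 \ge \Gamma^{-1}\, \bw^T A_S \bw$, detectable via the top eigenvector of $A_S^{-1/2}\bx \bx^T A_S^{-1/2}$), remove $\bx$ and update $A_S$. On termination, set $E = \{\bx : \bx^T A_S^{-1} \bx \le \Gamma^{-1}\}$; by construction no surviving point is a $\Gamma^{-1}$-outlier of the empirical restriction.

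The main obstacle, and the heart of the argument, is showing the loop halts after removing at most $m/2$ points with $\Gamma^{-1} = \tilde{O}(db)$; I would use a $\log\det$ potential. When a point $\bx$ with $\bx^T A_S^{-1}\bx \ge \Gamma^{-1}$ is removed, the matrix determinant lemma yields
\[
  \det A_{S\setminus\{\bx\}} \;\le\; \det A_S \cdot \Bigl(1 - \tfrac{c\, \Gamma^{-1}}{|S|}\Bigr),
\]
so $\log \det A_S$ decreases by $\Omega(\Gamma^{-1}/|S|)$ per removal. On the other hand, because the samples have $b$-bit integer coordinates, $\log \det A_S$ remains bounded in absolute value by $O(d b + d \log m)$ throughout the execution (the upper bound comes from $\|A_S\|\le \mathrm{poly}(2^b)$; the lower bound comes from the smallest nonzero rational eigenvalue representable on $b$-bit data). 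Combining these bounds, the total number of removals is $O(|S|\cdot db \cdot \Gamma) \le m/2$ for the chosen $\Gamma^{-1} = \tilde O(db)$, which simultaneously guarantees $|S| \ge m/2$ and hence $\wh{\D}[E] \ge 1/2$.

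Finally, to transfer the guarantees from $\wh{\D}$ to $\D_\bx$, I would observe that with $\tilde O(d^2 b)$ samples the empirical second-moment matrix on any ellipsoid whose support is contained in a $\mathrm{poly}(2^b)$ ball is spectrally close to the true second-moment matrix up to constants (matrix Chernoff, using that the per-sample operator norm after preconditioning is controlled by $\Gamma^{-1}$). Consequently, absence of $\Gamma^{-1}$-outliers in the empirical restriction implies absence of $\tilde O(db)$-outliers in $\D_\bx|_E$, and the mass lower bound $\wh{\D}[E]\ge 1/2$ transfers to $\D_\bx[E]\ge 1/2 - o(1)$ by a DKW-type bound, giving the statement after absorbing the loss into the logarithmic factors of $\Gamma^{-1} = \tilde O(db)$.
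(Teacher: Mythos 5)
The paper does not prove this lemma; it cites Theorem~3 of Dunagan and Vempala \cite{DV:04} and uses it as a black box, so there is no in-paper proof to compare against. Your sketch is a reconstruction of the Dunagan--Vempala outlier-removal argument: iteratively strip points that witness a large outlier direction, control the number of removals with a $\log\det$ potential bounded in terms of bit complexity, and then transfer the guarantee from the sample to the population. The broad architecture matches the cited proof, and the asymptotics $\Gamma^{-1}=\tilde O(db)$, $m=\tilde O(d^2 b)$ come out correctly.

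A few of the steps need more care than the sketch gives. The determinant recurrence you write, $\det A_{S\setminus\{\bx\}}\le\det A_S\,(1-c\Gamma^{-1}/|S|)$, omits the rescaling factor $\left(|S|/(|S|-1)\right)^d$ that arises because $A_S$ is an \emph{average} over $|S|$ points; taking logs, the per-removal change is $-\Omega(\Gamma^{-1}/|S|)+O(d/|S|)$, and one needs $\Gamma^{-1}\gg d$ for the potential to actually decrease (your choice supplies this, but the point must be made explicit). The lower bound on $\log\det A_S$ via integrality also needs a caveat for the degenerate case $\det A_S=0$, which is handled by working inside the span of $S$. Finally, the sample-to-population transfer is the most delicate part and deserves more than a gesture: the ellipsoid $E$ is data-dependent, so showing that $\D_\bx|_E$ has no $\tilde O(db)$-outliers requires relating $\Sigma_E=\E_{\bx\sim\D_\bx|_E}[\bx\bx^T]$ to $A_S$. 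Concretely one argues $\Sigma_E\succeq c\, A_S$ by noting that $S$ is a constant fraction of the sampled points landing in $E$ and then applying matrix concentration uniformly over the data-dependent region, where the needed per-sample bound $\bx\bx^T\preceq\Gamma^{-1}A_S$ for $\bx\in E$ follows directly from the definition of $E$. None of these gaps are fatal---they are exactly the technical content of \cite{DV:04}---but they are the parts that make the lemma nontrivial rather than an immediate consequence of the potential sketch.
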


Given this lemma, we can adapt Algorithm~\ref{alg:main-algorithm} for the large margin case to work in general. 
The pseudo-code is given in Algorithm~\ref{alg:main-algorithm-general}. 
It similarly returns a decision list [$(\bw^{(1)},T^{(1)},E^{(1)})$, $(\bw^{(2)},T^{(2)},E^{(2)})$, $\cdots$] as output. 

\begin{algorithm}[H]
  \caption{Main Algorithm (general case)}
  \label{alg:main-algorithm-general}
  \begin{algorithmic}[1]
    \State Set $S^{(1)} = \R^d$, $\lambda = \eta + \eps$, $\Gamma^{-1} = \tilde{O}(d b)$, $m = \tilde {O}(\frac 1 {\Gamma^2 \eps^4})$.
    \State Set $i \leftarrow 1$.
    \State \label{step:Dx-emp-gen}\green{Draw $O\left((1/\eps^2) \log(1/(\eps \Gamma))\right)$ samples from $\D_{\bx}$ 
    to form an empirical distribution $\tilde{\D}_{\bx}$.}
    \While{$\pr_{\bx \sim \green{\tilde{\D}_{\bx}}} \left[ \bx \in S^{(i)} \right] \ge \green{\eps} $}
      \State Run the algorithm of Lemma~\ref{lm:outlier} to remove $\Gamma^{-1}$-outliers from the distribution $\D_{S^{(i)}}$ by filtering points outside the ellipsoid $E^{(i)}$.
      \State Let $\Sigma^{(i)} = \Ex_{(\bx,y) \sim \D^{(i)}|_{S^{(i)}}} [ \bx \bx^T ]$ and set $\D^{(i)} = \Gamma \Sigma^{(i)-1/2} \cdot \D|_{S^{(i)} \cap E^{(i)}}$ be the distribution $\D|_{S^{(i)} \cap E^{(i)}}$ brought in isotropic position and rescaled by $\Gamma$ so that all vectors have \new{$\ell_2$-norm at most $1$}.
      
      \State Let $L^{(i)}(\bw) = \Ex_{(\bx,y) \sim \D^{(i)}} [ \LR_\lambda(-y  \dotp{ \bw}{ \bx } )]$
      \State \label{step:sgd-general} Run SGD on $L^{(i)}(\bw)$ for $\green{\tilde{O}}(1/(\Gamma^2 \eps^2))$ iterations, 
      to get $\bw^{(i)}$ with $\|\bw^{(i)}\|_2 = 1$ such that $L^{(i)}(\bw^{(i)}) \le \min_{\bw: \|\bw\|_2 \leq 1} L^{(i)}(\bw) + \Gamma \eps / 2$.
      \State Draw $m$ samples from $\D^{(i)}$ to form an empirical distribution $\D_m^{(i)}$.
      \State \label{step:thresholdestimation-gen} Find a threshold $T^{(i)}$ such that $\pr_{(\bx,y) \sim \D_m^{(i)}} [ |\langle \bw^{(i)}, \bx \rangle| \ge T^{(i)} ] \ge \Gamma \eps$ and the empirical misclassification error, $\pr_{(\bx,y) \sim \D_m^{(i)}} [h_\bw(\bx) \neq y \, \big| \, |\langle \bw^{(i)}, \bx \rangle| \ge T^{(i)} ]$, is minimized.
      \State Revert the linear transformation by setting $\bw^{(i)} \leftarrow \Gamma \Sigma^{(i)-1/2} \bw^{(i)}$.
      \State Update the unclassified region $S^{(i+1)} \leftarrow S^{(i)} \setminus \{\bx : \bx \in E^{(i)} \wedge |\langle \bw^{(i)}, \bx \rangle| \ge T^{(i)} \} $ and set $i \leftarrow i + 1$.
    \EndWhile
    \State Return the classifier $[(\bw^{(1)},T^{(1)},E^{(1)}), (\bw^{(2)},T^{(2)},E^{(2)}), \cdots]$
  \end{algorithmic}
\end{algorithm}

\noindent Our main result is the following theorem:
\begin{theorem}\label{thm:general-case}
Let $\D$ be a distribution over $(d+1)$-dimensional labeled examples with bit-complexity $b$, 
generated by an unknown halfspace corrupted by Massart noise at rate $\eta<1/2$. 
Algorithm~\ref{alg:main-algorithm-general} uses $\tilde O(d^3 b^3/\eps^5)$ samples, 
runs in $\poly(d, 1/\eps, b)$ time, and returns, with probability $2/3$, a classifier $h$ with 
misclassification error $\err_{0-1}^{\D}(h) \leq \eta+\eps$.
\end{theorem}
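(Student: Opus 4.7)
The plan is to mirror the proof of Theorem~\ref{thm:margin-case} almost line by line, replacing the uniform margin $\gamma$ with the weak margin parameter $\Gamma = 1/\tilde{O}(db)$ obtained from the outlier-removal preprocessing of Lemma~\ref{lm:outlier}. The overall loop invariant and the telescoping argument over iterations will be identical; what changes is the per-iteration setup and the quantitative bound on $L^{(i)}(\bw^{\ast})$.

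Fix an iteration $i$ of Algorithm~\ref{alg:main-algorithm-general}. After applying Lemma~\ref{lm:outlier} to $\D|_{S^{(i)}}$ we obtain an ellipsoid $E^{(i)}$ of mass at least $1/2$ with no $\Gamma^{-1}$-outliers. The isotropic renormalization $\bx \mapsto \Gamma \Sigma^{(i)\,-1/2} \bx$ produces a distribution $\D^{(i)}$ supported on the unit ball for which, crucially, every unit direction $\bw$ satisfies a Paley--Zygmund-style lower bound of the form $\Ex_{\bx \sim \D^{(i)}_{\bx}}[|\langle \bw, \bx\rangle|] \ge \Omega(\Gamma)$, because the bounded second moment $\Gamma^2$ combined with $\|\bx\|_2 \le 1$ prevents $|\langle \bw, \bx\rangle|$ from being tiny in expectation. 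Applying the argument of Lemma~\ref{lm:opt_val} verbatim to the transformed distribution (where the target normal becomes $\Sigma^{(i)\,1/2}\bw^{\ast}/\Gamma$, up to normalization, and still defines the same labeling because the transformation is linear) yields the substitute bound
\[
L^{(i)}(\bw^{\ast,(i)}) \le -\Omega(\Gamma)\,(\lambda - \opt^{(i)}) \le -\Omega(\Gamma \eps),
\]
using $\lambda = \eta + \eps$ and $\opt^{(i)} \le \eta$.

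The remaining steps are now identical to the margin case. By Lemma~\ref{lm:sgd_guarantees}, $\tilde{O}(1/(\Gamma^2\eps^2))$ SGD steps in line~\ref{step:sgd-general} produce $\bw^{(i)}$ with $L^{(i)}(\bw^{(i)}) \le -\Omega(\Gamma \eps)$. Lemma~\ref{lm:structural} (which is purely distributional and has no margin hypothesis) then supplies a threshold $T^{(i)}$ such that the region $R^{(i)} = \{\bx : |\langle \bw^{(i)}, \bx\rangle| \ge T^{(i)}\}$ has mass $\Omega(\Gamma \eps)$ under $\D^{(i)}$ and conditional error at most $\eta + \eps$. Reverting the linear map preserves both of these properties, and since $E^{(i)}$ carries at least half of the mass of $\D|_{S^{(i)}}$, the actually removed region $R^{(i)} \cap E^{(i)}$ has mass $\Omega(\Gamma\eps)$ under $\D|_{S^{(i)}}$ with the same conditional error guarantee. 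DKW, as in the proof of Theorem~\ref{thm:margin-case}, ensures that the empirical threshold search in line~\ref{step:thresholdestimation-gen} faithfully recovers both bounds with $\tilde{O}(1/(\Gamma^2\eps^4))$ samples.

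Iterating, $\Pr_{\D_{\bx}}[S^{(i+1)}] \le (1-\Omega(\Gamma\eps))^i$, so after $K = \tilde{O}(1/(\Gamma\eps))$ rounds the uncovered mass drops below $\eps$; accounting for misclassification gives a final error of $\eta + O(\eps)$, which is rescaled to $\eta + \eps$. Summing the per-round costs (outlier removal $\tilde{O}(d^2 b)$, SGD and threshold estimation $\tilde{O}(1/(\Gamma^2\eps^4))$) over $K$ iterations yields total sample complexity $\tilde{O}(d^3 b^3/\eps^5)$ after substituting $\Gamma^{-1} = \tilde{O}(db)$, matching the stated bound, and similarly polynomial runtime. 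The main technical hazard I anticipate is verifying that the isotropization carried out independently inside each $S^{(i)}$ does not degrade the weak-margin guarantee across iterations---this is where the use of Lemma~\ref{lm:outlier} on the \emph{current} conditional distribution $\D|_{S^{(i)}}$ (rather than only on $\D$) is essential, and one has to be careful that the Massart noise structure is preserved under both conditioning on $S^{(i)}$ and the linear reparametrization, both of which hold because neither operation touches the labels.
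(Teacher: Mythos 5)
Your overall plan (outlier removal \`a la Lemma~\ref{lm:outlier}, isotropize and rescale, run SGD, threshold via Lemma~\ref{lm:structural}, iterate, telescope) is the same as the paper's, and most of the accounting you do is correct. However, there is a genuine logical gap at the point where you claim that the argument of Lemma~\ref{lm:opt_val} applies ``verbatim'' to the rescaled distribution and yields
$L^{(i)}(\bw^{\ast,(i)}) \le -\Omega(\Gamma)\,(\lambda - \opt^{(i)})$.

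That claim is false. The proof of Lemma~\ref{lm:opt_val} writes $\ell(\bw^{\ast},\bx) = (\eta(\bx)-\lambda)\,|\langle\bw^{\ast},\bx\rangle|$ and then, \emph{pointwise}, bounds $|\langle\bw^{\ast},\bx\rangle|\ge\gamma$ before integrating out $\eta(\bx)$; that is the only reason $\opt=\E[\eta(\bx)]$ survives into the final bound. After the outlier-removal preprocessing you no longer have any pointwise lower bound on $|\langle\bw^{\ast},\bx\rangle|$, only the averaged fact $\E_{\bx}[|\langle\bw^{\ast},\bx\rangle|]\ge\Gamma$. You cannot integrate out $\eta(\bx)$ while keeping a per-point factor of $\Gamma$, because $\eta(\bx)$ and $|\langle\bw^{\ast},\bx\rangle|$ can be adversely correlated: e.g., put mass $1-\Gamma$ on a point with $\langle\bw^{\ast},\bx\rangle = 0$ and $\eta(\bx)=0$, and mass $\Gamma$ on a point with $|\langle\bw^{\ast},\bx\rangle|=1$ and $\eta(\bx)=\eta$. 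Then $\E[|\langle\bw^{\ast},\bx\rangle|]=\Gamma$ and $\opt=\Gamma\eta$, but $L(\bw^{\ast}) = -\Gamma(\lambda-\eta)$, which is much larger than $-\Omega(\Gamma)(\lambda-\opt)\approx -\Omega(\Gamma)\lambda$ once $\eps=\lambda-\eta$ is small compared to $\eta$. The correct general-case bound, which the paper isolates as a separate Lemma~\ref{lm:opt_val_general}, is $L(\bw^{\ast}) \le -\Gamma(\lambda-\eta)$: one must bound $\eta(\bx)\le\eta$ \emph{pointwise} first, pull out the factor $(\lambda-\eta)$, and only then apply $\E[|\langle\bw^{\ast},\bx\rangle|]\ge\Gamma$. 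This is exactly the ``additional careful analysis'' alluded to in Section~\ref{ssec:techniques} when the authors note that the weak margin from~\cite{DV:04} is strictly weaker than a uniform margin.

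Your final inequality $\le -\Omega(\Gamma\eps)$ happens to still be true because the algorithm sets $\lambda=\eta+\eps$, so $\lambda-\eta=\eps\le\lambda-\opt^{(i)}$, and the remainder of your proof (SGD, Lemma~\ref{lm:structural}, DKW, telescoping, and the final sample-count) is sound. But you should replace the ``verbatim'' invocation of Lemma~\ref{lm:opt_val} with the separate argument of Lemma~\ref{lm:opt_val_general}; as stated, the intermediate claim with $\opt^{(i)}$ is not provable from the weak-margin property, and writing it could mislead you into (incorrectly) expecting a Remark~\ref{rem:margin-error}-style improvement in the general case.
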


We now analyze Algorithm~\ref{alg:main-algorithm-general} and establish Theorem~\ref{thm:general-case}. 
To do this, we need to adapt Lemma~\ref{lm:opt_val} to the case without margin. 
We replace the \new{margin} condition by requiring that the minimum eigenvalue of the covariance 
matrix is at least $\Gamma$.

\begin{lemma}\label{lm:opt_val_general}
Let $\D_\bx$ be any distribution over points with $\ell_2$-norm bounded by 1, with covariance having minimum eigenvalue at least $\Gamma$. If $\lambda \ge \eta$, then $\min_{\bw: \ltwo{\bw} \le 1} L(\bw) \le - \Gamma (\lambda - \eta)$.
\end{lemma}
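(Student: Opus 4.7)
The plan is to prove the lemma by using $\bw^{\ast}$ itself as the witness, adapting the argument of Lemma~\ref{lm:opt_val} by replacing the pointwise margin $|\langle \bw^\ast,\bx\rangle|\ge\gamma$ with a second-moment bound coming from the covariance assumption. The only trick is that Claim~\ref{claim:relationship} gives us $\ell(\bw^\ast,\bx)=(\eta(\bx)-\lambda)|\langle \bw^\ast,\bx\rangle|$, so without a pointwise lower bound on $|\langle\bw^\ast,\bx\rangle|$ we cannot directly convert this into something controlled by $\E[\langle \bw^\ast,\bx\rangle^2]$. The workaround is to exploit the sign of the coefficient together with $|u|\ge u^2$ for $|u|\le 1$.

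Concretely, I would first note that we may assume $\|\bw^\ast\|_2=1$, so $\bw^\ast$ is feasible. By Claim~\ref{claim:relationship}, and using $\eta(\bx)\le \eta\le\lambda$, the factor $(\eta(\bx)-\lambda)$ is nonpositive at every $\bx$. Since $\|\bx\|_2\le 1$ and $\|\bw^\ast\|_2=1$, we have $|\langle \bw^\ast,\bx\rangle|\le 1$, hence $|\langle \bw^\ast,\bx\rangle|\ge \langle \bw^\ast,\bx\rangle^2$. Multiplying a nonpositive quantity by a larger nonnegative number only decreases it, so
\[
\ell(\bw^\ast,\bx)\;=\;(\eta(\bx)-\lambda)\,|\langle \bw^\ast,\bx\rangle|\;\le\;(\eta(\bx)-\lambda)\,\langle \bw^\ast,\bx\rangle^2\;\le\;(\eta-\lambda)\,\langle \bw^\ast,\bx\rangle^2.
\]

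Taking expectations over $\bx\sim\D_\bx$ yields
\[
L(\bw^\ast)\;\le\;(\eta-\lambda)\,\E_{\bx\sim\D_\bx}[\langle \bw^\ast,\bx\rangle^2]\;=\;(\eta-\lambda)\,\bw^{\ast\top}\Sigma\,\bw^\ast,
\]
where $\Sigma=\E[\bx\bx^\top]$. Since $\lambda_{\min}(\Sigma)\ge\Gamma$ and $\|\bw^\ast\|_2=1$, we get $\bw^{\ast\top}\Sigma\bw^\ast\ge\Gamma$, and together with $\eta-\lambda\le 0$ this gives $L(\bw^\ast)\le -\Gamma(\lambda-\eta)$. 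Since $\bw^\ast$ is feasible, $\min_{\|\bw\|_2\le 1}L(\bw)\le L(\bw^\ast)\le -\Gamma(\lambda-\eta)$, which is the stated bound.

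The only subtle point is step of replacing $|u|$ by $u^2$: this sign-sensitive inequality is what lets a variance-type lower bound (the minimum eigenvalue $\Gamma$) substitute for the pointwise margin used in Lemma~\ref{lm:opt_val}. Everything else is a direct expectation calculation.
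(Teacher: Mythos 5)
Your proof is correct and is essentially the paper's own argument: both use $\bw^\ast$ as the witness, apply Claim~\ref{claim:relationship}, and replace the pointwise margin bound of Lemma~\ref{lm:opt_val} with the observation that $|\langle\bw^\ast,\bx\rangle|\ge\langle\bw^\ast,\bx\rangle^2$ when $|\langle\bw^\ast,\bx\rangle|\le 1$, so that the minimum eigenvalue of the second-moment matrix takes over the role of $\gamma$. The only cosmetic difference is that you apply the $|u|\ge u^2$ inequality pointwise before taking expectations (carefully tracking the sign of the $(\eta(\bx)-\lambda)$ factor), whereas the paper first bounds $\eta(\bx)-\lambda\le -(\lambda-\eta)$, takes the expectation, and then uses $\E[|\langle\bw^\ast,\bx\rangle|]\ge\E[\langle\bw^\ast,\bx\rangle^2]\ge\Gamma$; these are the same two facts in the opposite order.
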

\begin{proof}
We will show the statement for the \new{optimal} unit vector $\bw^{\ast}$.
For any fixed $\bx$, we have that $$\ell(\bw^{\ast},\bx) = (\err(\bw^{\ast},\bx) - \lambda) | \langle \bw^{\ast}, \bx \rangle | = (\eta(\bx) - \lambda) |\langle \bw^{\ast}, \bx \rangle| \le -  (\lambda - \eta) |\langle \bw^{\ast}, \bx \rangle|.$$
Taking expectation over $\bx$ drawn from $\D_\bx$, we get the statement as 
$$\Ex[|\langle \bw^{\ast}, \bx \rangle|] \ge \Ex[|\langle \bw^{\ast}, \bx \rangle|^2]  \ge \Gamma,$$
where we used the fact that for all points $\bx$, $|\langle \bw^{\ast}, \bx \rangle| \le \ltwo{\bx}^2 \le 1$.
\end{proof}

With Lemma~\ref{lm:opt_val_general} in hand, we are ready to prove Theorem~\ref{thm:general-case}. We will use Lemma~\ref{lm:sgd_guarantees} and Lemma~\ref{lm:structural} whose statements do not require that the distribution 
of points has large margin.

\begin{proof}[Proof of Theorem \ref{thm:general-case}]
We again consider the steps of Algorithm~\ref{alg:main-algorithm-general} in every iteration $i$. At every iteration, we consider a distribution $\D^{(i)}$ consisting only of points not handled in previous iterations. 

Similar to the proof of Theorem~\ref{thm:margin-case}, we start by noting that with high probability the total number of iterations is $\tilde{O}(1/(\Gamma \epsilon))$. This is because at every iteration, the empirical probability mass under $\D_m^{(i)}$ 
of the region $\{\bx : |\langle \bw^{(i)}, \bx \rangle| \ge T^{(i)} \}$ removed from $S^{(i)}$ 
to obtain $S^{(i+1)}$ is at least $\Gamma \eps$ and thus by the DKW inequality~\cite{DKW56} implies the true 
probability mass of this region is at least $\Gamma \eps/2$ with high probability.
After $K = \Theta(\log(1/\eps)/(\eps\Gamma))$ iterations, we will have that $\Pr_{\D_{\bx}}[S^{(i+1)}] \leq \eps/3$.
Step~\ref{step:Dx-emp-gen} guarantees that the mass of $S^{(i)}$ under $\tilde{\D}_{\bx}$
is within an additive $\eps/3$ of its mass under $\D_{\bx}$, for $i \in [K]$. This implies that
the loop terminates after at most $K$ iterations with high probability.

At every iteration, the distribution $\D^{(i)}$ is rescaled so that the norm of all points is bounded by $1$ and the covariance matrix has minimum eigenvalue $\Gamma$ as guaranteed by Lemma~\ref{lm:outlier}. By Lemma~\ref{lm:opt_val_general}, it follows 
that the minimizer of the loss $L^{(i)}$ has value less than 
$- \Gamma (\lambda - \eta) \le - \Gamma \eps$.
By the guarantees of Lemma~\ref{lm:sgd_guarantees}, running SGD in line~\ref{step:sgd-general} on $L^{(i)}( \cdot )$ with projection 
to the unit $\ell_2$-ball for $O\left(\log(1/\delta)/(\Gamma^2 \eps^2)\right)$ steps, 
we obtain a $\bw^{(i)}$ such that, with probability at least $1-\delta$, it holds 
$L^{(i)}(\bw^{(i)}) \le - \Gamma \eps/2$ and $\|\bw^{(i)}\|_2 = 1$. 
Here $\delta>0$ is a parameter that is selected so that the following claim 
holds: With probability at least $9/10$, for all iterations $i$ of the while loop
we have that $L^{(i)}(\bw^{(i)}) \le - \Gamma \eps/2$. Since the total number of iterations
is \newblue{$\tilde{O}(1/(\Gamma \epsilon))$}, setting $\delta$
to $\tilde{\Omega}(\eps \Gamma)$ and applying a union bound over all iterations gives the previous claim. 
Therefore, the total number of SGD steps per iteration is $\tilde{O}(1 / (\Gamma^2\eps^2))$. 
For a given iteration of the while loop, running SGD requires $\tilde{O}(1 / (\Gamma^2\eps^2))$ 
samples from $\D^{(i)}$ which translate to at most $\tilde{O}\left(1/(\Gamma^2 \eps^3)\right)$ samples from $\D$, 
as $\pr_{\bx \sim \D_\bx} \left[ \bx \in S^{(i)} \right]  \geq 2\eps/3$.  

Then, similar to the proof of Theorem~\ref{thm:margin-case}, Lemma~\ref{lm:structural} implies that there exists a threshold $T \ge 0$, such that:
\begin{itemize}
  \item[(a)] $\pr_{(\bx,y) \sim \D^{(i)}} [ |\langle \bw, \bx \rangle| \ge T ] \ge \Gamma \eps,$ and
  \item[(b)] $\pr_{(\bx,y) \sim \D^{(i)}} [h_\bw(\bx) \neq y \, \big| \, |\langle \bw, \bx \rangle| \ge T ] \le \eta + \eps.$
\end{itemize}

Line~\ref{step:thresholdestimation-gen} of  Algorithm~\ref{alg:main-algorithm-general}  estimates the threshold using samples. By the 
DKW inequality~\cite{DKW56}, we know that with $m = \tilde {O}(\frac 1 {\Gamma^2 \eps^4})$ samples we can estimate the CDF 
within error $\Gamma \eps^2$ with probability $1-\poly(\eps,\Gamma)$. This suffices to estimate the probability 
mass of the region within additive $\Gamma \eps^2$ and the misclassification error within $\eps/3$. 
This is satisfied for all iterations with constant probability.

In summary, with high constant success probability, Algorithm~\ref{alg:main-algorithm-general} 
runs for \newblue{$\tilde{O}(1/(\Gamma \epsilon))$} iterations and draws 
$\tilde {O}(1/(\Gamma^2 \eps^4))$ samples per round for a total of 
$\tilde {O}(1/(\Gamma^3 \eps^5))$ samples. As each iteration runs in polynomial time, 
the total running time follows.

When the while loop terminates, we have that $\pr_{\bx \sim \D_\bx} [ \bx \in S^{(i)}] \le 4\eps/3$, 
i.e., we will have accounted for at least a $(1-4\epsilon/3)$-fraction of the total probability mass. 
Since our algorithm achieves misclassification error at most $\eta + 4\eps/3$ in all the regions we accounted for, 
its total misclassification error is at most $\eta + 8\eps/3$. 
Rescaling $\eps$ by a constant factor gives Theorem~\ref{thm:general-case}.
\end{proof}

\section{Lower Bounds Against Natural Approaches} \label{sec:lb}

In this section, we show that certain natural approaches for learning halfspaces
with Massart noise inherently fail, even in the large margin case.

We begin in Section~\ref{min_surrogate} by showing that the common 
approach of using a convex surrogate function for the 0-1 loss cannot lead 
to non-trivial misclassification error. (We remark that this comes in sharp contrast 
with the problem of learning large margin halfpaces with RCN, where a convex surrogate
works, see, e.g., Theorem~\ref{thm:rnc} in Section~\ref{sec:rcn}). 

In Section~\ref{ssec:thresh}, we provide evidence that improving the misclassification guarantee of $\eta+\eps$ 
achieved by our algorithm requires a genuinely different approach. In particular, we show that the approach 
of iteratively using {\em any} convex proxy followed by thresholding gets stuck 
at error $\Omega(\eta)+\eps$, even in the large margin case.

\subsection{Lower Bounds Against Minimizing a Convex Surrogate Function}\label{min_surrogate}
\new{One of the most common approaches in machine learning is to replace the 0-1 loss
in the ERM by an appropriate convex surrogate and solve the corresponding convex optimization problem.
In this section, we show that this approach inherently fails to even give 
a weak learner in the presence of Massart noise --- even under a margin assumption.} 

In more detail, we construct distributions over a finite sets of points in the two-dimensional unit ball 
for which the method of minimizing a convex surrogate will always 
\new{have misclassification error $\min\{1/2,\Theta(\eta/\gamma)\}$}, 
where $\gamma$ is the maximum margin with respect to any hyperplane. 
\new{Our proof is inspired by an analogous construction in~\cite{LongS10}, 
which shows that one cannot achieve non-trivial misclassification error 
for learning halfspaces in the presence of RCN, using certain convex boosting techniques.} 
\new{Our argument is more involved in the sense that we need to distinguish 
two cases and consider different distributions for each one. Furthermore, by leveraging the additional strength of the Massart noise model, we are able to show that the misclassification error has to 
be larger than the noise level $\eta$ by a factor of $1/\gamma$.} 

In particular, our first case corresponds to \new{the situation where} the convex surrogate function 
is such that misclassified points are penalized by a fair amount and therefore the effect of noise 
of correctly classified points on the gradient is significant. This allows a significant amount of 
probability mass to be in the region where the true separating hyperplane and the one defined 
by the minimum of the convex surrogate function disagree. The second case, which is the 
complement of the first one, uses the fact that the contribution of a correctly classified point 
on the gradient is not much smaller than that of a misclassified point, again allowing a 
significant amount of probability mass to be given to the aforementioned disagreement region.   
Formally, we prove the following: 

\begin{theorem} \label{thm:lb-surr}
Consider the family of algorithms that produce a classifier 
$\sgn(\langle\bw^{\ast}, \bx\rangle)$, where $\bw^{\ast}$ is the minimum 
of the function $G(\bw)=\E_{(\bx,y)\sim \D}[\phi(y\langle \bw, \bx\rangle)]$. 
For any decreasing convex \footnote{\new{The function is not necessarily differentiable. In case it is not, 
being \emph{convex} means that the sub-gradients of the points are monotonically non-decreasing.}} 
function $\phi: \mathbb{R}\rightarrow \mathbb{R}$, there exists a distribution $\D$ over 
$\B_2 \times \{ \pm 1\}$with margin $\gamma\leq \frac{\sqrt{3}-1}{4}$ such that the classifier 
$\sgn(\langle\bw^{\ast}, \bx\rangle)$, misclassifies a $\min\{\frac{\eta}{8\gamma},\frac12\}$ fraction of the points.      
\end{theorem}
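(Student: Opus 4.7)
The plan is to adapt the Long--Servedio style obstruction for convex losses \new{under RCN}, exploiting the stronger control of the Massart adversary (who may selectively choose \emph{which} points to perturb) to sharpen the conclusion from error $\Omega(\eta)$ to error $\Omega(\eta/\gamma)$. I would fix an arbitrary decreasing convex $\phi$ and work in $\R^2$ with $\bw^\ast = \e_1$; by \emph{decreasing} and \emph{convex}, $-\phi'$ is positive and non-increasing (interpreting $\phi'$ as a sub-gradient if $\phi$ is non-differentiable). The two key scalars will be $a \eqdef -\phi'_+(\gamma)$ and $b \eqdef -\phi'_-(-\gamma)$, so that $0 \le a \le b$, representing the sub-gradient magnitudes at a correctly and incorrectly classified $\gamma$-margin point, respectively. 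The case split is driven by the ratio $b/a$.

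In the first case ($b \gg a$, i.e., misclassification penalized much more than correct classification), I would place equal mass on the four ``margin'' points $(\pm\gamma, \pm\sqrt{1-\gamma^2})$ with true labels $\sgn(x_1)$, and let the Massart adversary flip the labels of the two positive-$x_1$ margin points at the maximum allowed rate $\eta$. Computing $\nabla G(\bw^\ast) = \Ex_{(\bx,y)\sim\D}[-y\bx\,\phi'(y \langle \bw^\ast,\bx\rangle)]$ shows that its $\e_2$-component is of order $\eta\sqrt{1-\gamma^2}\,b$ plus a smaller term of order $a$, which is strictly non-zero when $b/a$ is sufficiently large. Hence $\bw^\ast$ is not a minimizer of the convex function $G$, and convexity forces the minimizer $\wh{\bw}$ to rotate away from $\bw^\ast$ by an angle $\Omega(\eta/\gamma)$; this rotation sweeps an entire margin point across the new separator, giving a misclassification mass of at least $\eta/(8\gamma)$.

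In the complementary case ($b \le c\cdot a$ for a constant $c$), flipping labels does not meaningfully amplify the gradient, so instead I would use an asymmetric construction in which the un-flipped gradient at $\bw^\ast$ is already non-trivial: a dominant portion of the probability mass is placed slightly off the $\e_1$-axis on one side and a smaller counterweight on the other, again with Massart noise placed adversarially on a carefully chosen subset. Because correctly classified points contribute nearly as strongly to the sub-gradient as misclassified ones, the minimizer rotates all the way into a halfspace that flips labels on a constant fraction of the distribution, giving the $1/2$ side of the $\min\{\cdot,\cdot\}$ bound.

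The main obstacle will be the tight bookkeeping in Case~1 needed to extract the sharp $1/\gamma$ factor: the margin points must be placed just far enough from the boundary that the small angular rotation of $\bw^\ast$ actually sweeps them across, while the ``correct-side'' gradient contribution must not cancel the flipped one (which pins down the precise constant in $b/a$ that separates the two cases). Handling non-differentiable $\phi$ such as hinge loss is a secondary nuisance, managed by working with sub-gradients throughout and invoking the sub-gradient form of the first-order optimality condition for the convex $G$.
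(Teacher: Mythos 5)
Your high-level plan (case-split on the relative magnitudes of $\phi'$ on the two sides of zero, then construct a two-dimensional margin distribution on which the convex minimizer is badly wrong) is the right one, and it does match the paper's strategy in spirit. But the proposal as written has two genuine gaps, one structural and one in a key step.

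\textbf{The Case 1 construction is killed by symmetry.} You place equal mass on the four points $(\pm\gamma,\pm\sqrt{1-\gamma^2})$ with true labels $\sgn(x_1)$, and flip the two positive-$x_1$ points at rate $\eta$. This distribution is symmetric under reflection across the $\e_1$-axis, and this symmetry forces the $\e_2$-component of $\nabla G(\e_1)$ to vanish identically. Concretely, the contributions of $(\gamma,\sqrt{1-\gamma^2})$ and $(\gamma,-\sqrt{1-\gamma^2})$ to $\E[\phi'(y\langle\e_1,\bx\rangle)\,y\,\bx_2]$ are exact negatives of each other (each equals $\pm\sqrt{1-\gamma^2}[\eta\,b-(1-\eta)\,a]$), and likewise for the pair at $-\gamma$. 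So $\e_1$ is a first-order stationary point on the sphere, and since $G$ is convex, it is in fact a global minimizer --- which is the opposite of what you need. To break this, the construction must be asymmetric about the $\e_1$-axis.

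\textbf{The ``rotation'' step is not rigorous.} Even in a fixed construction with $\nabla G(\bw^*)\cdot\e_2 \neq 0$, the claim that ``convexity forces the minimizer to rotate away by angle $\Omega(\eta/\gamma)$'' does not follow: a nonzero gradient only tells you that $\bw^*$ is not the minimizer, not how far away the minimizer sits. One would need additional curvature/Lipschitz information on $G$ to turn a gradient bound into a bound on the displacement of the argmin. The paper sidesteps this issue entirely by arguing in the other direction: it constructs a distribution in which the \emph{bad} separator $\bv=\e_1$ \emph{is} the first-order stationary point (hence, by convexity, the global minimizer), choosing the mass parameter $p$ to exactly satisfy $\E[\phi'(y\langle\bv,\bx\rangle)\,y\,\bx_2]=0$, and then directly evaluates the 0--1 error of $\bv$, which is $p+(1-p)\eta$. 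Working ``forward'' from stationarity of the bad hyperplane is what makes the quantitative bound come out cleanly.

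Two secondary remarks. The paper's case split is quantified --- ``there exists $z\in[0,\sqrt{3}/2]$ with $|\phi'(z)|<\tfrac{1}{2}\tfrac{\eta}{1-\eta}|\phi'(-z)|$'' versus ``for all $z$'' --- rather than pinned to a single evaluation point $z=\gamma$ as in your $a,b$. This freedom to choose the best $z$ is what allows both cases to be handled with explicit two-point distributions. Also, the paper uses only \emph{two} support points in each case, with only one of them carrying Massart noise; the four-point symmetric construction adds complication without any gain, and as noted above it actively breaks the argument.
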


\begin{proof}
We consider algorithms that perform ERM with a convex surrogate, i.e., 
minimize a loss of the form
$G(\bw)=\E_{(\bx,y)\sim \D}[\phi(y\langle \bw, \bx\rangle)]$, 
for some convex function $\phi: \mathbb{R}\rightarrow \mathbb{R}$ 
for \new{$\|\bw\|_2\leq 1$}.
\new{
We can assume without loss of generality that $\phi$ is differentiable and its derivative is non-decreasing. 
Even if there is a countable number of points in which it is not,  there is a subderivative 
that we can pick for each of those points such that the derivative is increasing overall, 
since we have assumed that $\phi$ is convex. Therefore, our argument still goes through even without
assuming differentiability.}

We start by calculating the gradient of $G$ as a function of the derivative of $\phi$ at the minimum of $G$. 
Suppose that $ \bv\in \mathbb{R}^d$ is the minimizer of $G$ subject to \new{$\|\bw\|_2\leq 1$}. 
This requires that \new{either $\nabla G(\bv)$ is parallel to $\bv$, in case the unconstrained minimum 
lies outside the region $\|\bw\|_2\leq 1$, or $\nabla G(\bv)=\mathbf{0}$.} 
Therefore, we have that \new{for every $i>1$, the following holds}:
\[
\frac{\partial G}{\partial \bw_i}(\bv)=\E_{(\bx,y)\sim \D}[\phi^\prime(y\langle \bv, \bx\rangle)(y \bx_i)]=0 \;.
\]
\new{Our lower bound construction produces a distribution $\D$ over $(\bx, y)$ 
whose $\bx$ marginal, $\D_{\bx}$, is supported on the $2$-dimensional unit ball.}
We need to consider two complementary cases for the convex function $\phi$.
For each case, we will define judiciously chosen distributions, $\D_1,\D_2$ for which the result holds. 

\paragraph{Case I:} There exists $z\in [0,\sqrt{3}/2]$ such that:
$|\phi^\prime(z)|<\frac12 \frac{\eta}{1-\eta}|\phi^\prime(-z)|$.

\smallskip

\noindent In this case, we consider the distribution shown in Figure~\ref{fg:case1} (left), 
where the point $(z,-\gamma)$ has probability mass $p$ and 
the remaining $1-p$ mass in on the point $(z,\sqrt{1-z^2})$. 
We need to pick the parameter $p$ so that $ \bv=\be_1$ is the minimum of $G(\bw)$.    

Note that the misclassification error is 
$\new{\err_{0-1}^{\D_1} (\sgn(\langle\bv,\bx\rangle))} = p+(1-p)\cdot \eta$.
The condition that $\bv=\be_1$  is
a minimizer of $G(\bw)$ is equivalent to 
$\E_{(\bx,y) \sim \D_1}[\phi^\prime(y\langle\bv, \bx\rangle)(y \bx_2)]=0$.
Substituting for our choice of $\D_1$ 
with noise level $\eta$ on $(z,-\gamma)$ and $0$ on $(z,\sqrt{1-z^2})$, we get:
\[
p\cdot\phi^\prime(-z)\cdot\gamma+(1-p)\cdot(1-\eta)\phi^\prime(z)\cdot \sqrt{1-z^2} 
+(1-p)\cdot\eta\cdot \phi^\prime(-z)\cdot (-\sqrt{1-z^2})=0 \;.
\]
Equivalently, we have:
\[
(1-p)\cdot\eta\cdot |\phi^\prime(-z)|\cdot  \sqrt{1-z^2}= 
p\cdot\gamma\cdot|\phi^\prime(-z)|+(1-p)\cdot(1-\eta)|\phi^\prime(z)| \sqrt{1-z^2} \;.
\]
Now, suppose that $|\phi^\prime(z)|=(1-\alpha) \frac{\eta}{1-\eta}|\phi^\prime(-z)|$, for some $\alpha>\frac12$. 
By substituting and simplifying, we get:
$$ p\cdot\gamma=\alpha(1-p)\eta \sqrt{1-z^2}=(1-p)\eta\Delta \;,
$$
where $\Delta=\alpha\sqrt{1-z^2}$, which in turns gives that
$$p=\frac{\eta\Delta}{\gamma+\eta\Delta} \;.$$ 
Thus, the misclassification error is
\[\new{\err_{0-1}^{\D_1} (\sgn(\langle\bv,\bx\rangle))}=p+(1-p)\eta=\eta+(1-\eta)p=\eta+\frac{(1-\eta)\eta\Delta}{\gamma+\eta\Delta}=\frac{\eta(\gamma+\Delta)}{\gamma+\eta\Delta}\geq \frac{1}{1+\frac{\gamma}{\eta\Delta}} \;.\] 
Note that for margin $\gamma\leq \eta\cdot \Delta$, we have that 
$\new{\err_{0-1}^{\D_1} (\sgn(\langle\bv,\bx\rangle))}\geq \frac12$, and we can achieve error exactly $\frac12$ by setting the point $Q_1$ 
at distance exactly $\eta\cdot \Delta$. On the other hand, when the margin is 
$\gamma\leq \eta\cdot\Delta$, we have: 
$\new{\err_{0-1}^{\D_1} (\sgn(\langle\bv,\bx\rangle))} \geq \frac{\eta\Delta}{2\gamma}\geq \frac{\eta}{8\gamma}$. 
The last inequality comes from the fact that $\Delta=\alpha\sqrt{1-z^2}\geq 1/4$, since $\alpha\geq 1/2$ and $z\leq \sqrt{3}/2$.
  
%
%
%
%
%

\begin{figure}
\begin{center}
\includegraphics[scale=0.2]{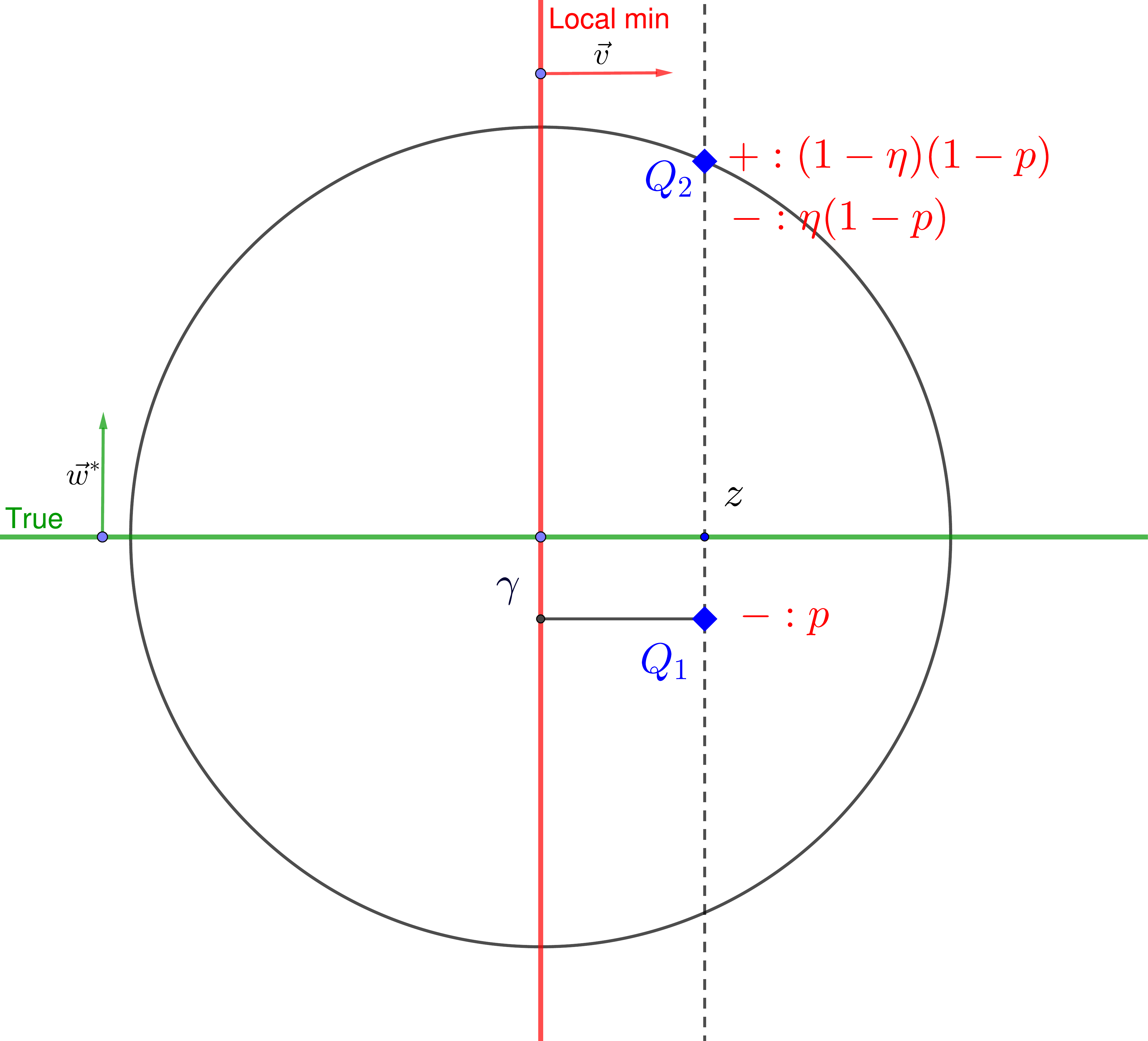}
\includegraphics[scale=0.3]{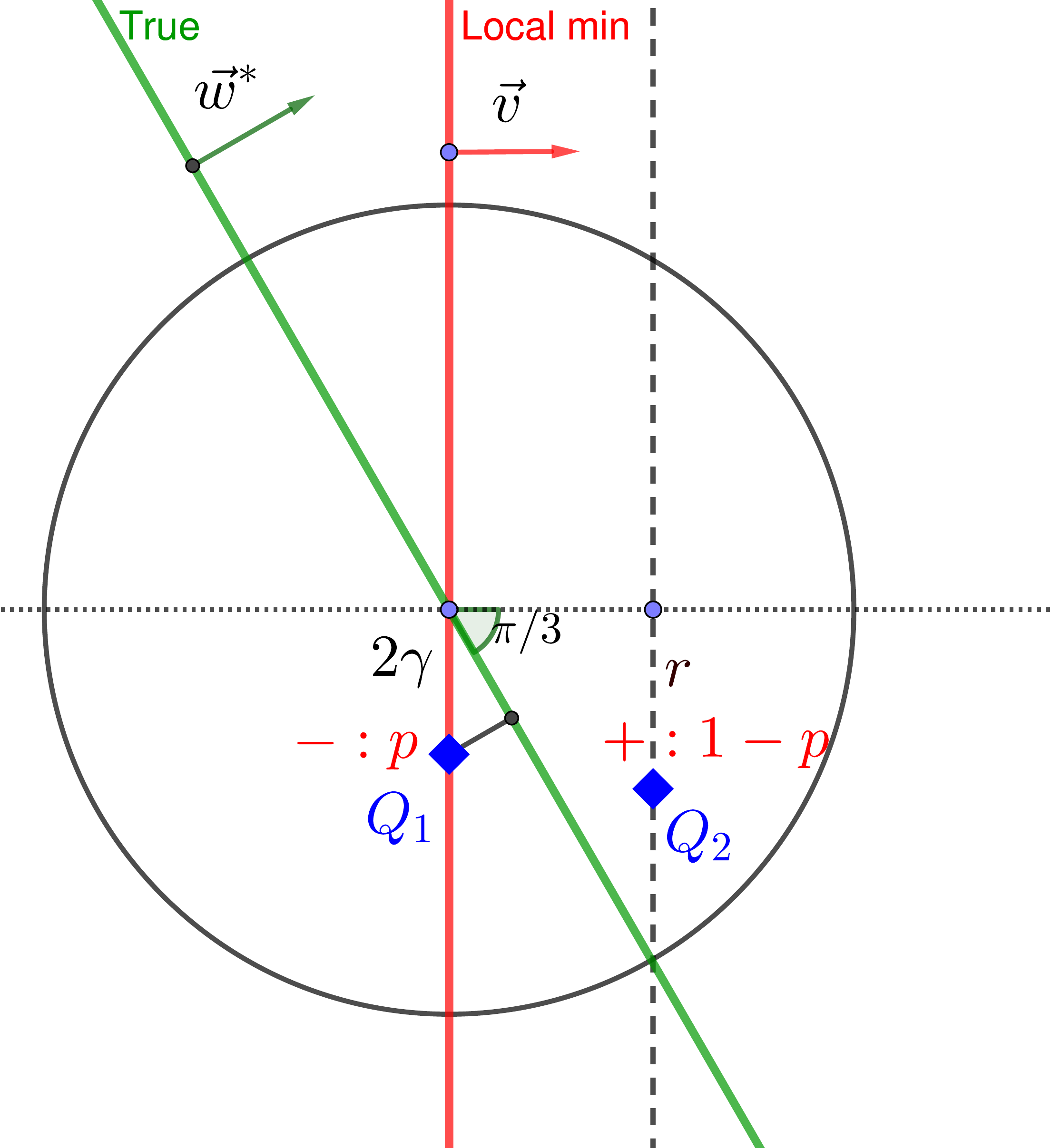}
\end{center}
\caption{Probability distribution for Case I 
is on the left and for the complementary Case II 
 is on the right.}
 \label{fg:case1}
\end{figure}

\paragraph{Case II:} For all $z\in [0,\sqrt{3}/2]$ we have that  $|\phi^\prime(z)|\geq\frac12 \frac{\eta}{1-\eta}|\phi^\prime(-z)|$.


\smallskip

\noindent In this case, we consider the distribution shown in Figure~\ref{fg:case1} (right), 
where the only points that have non-zero mass are: $(0,-2\gamma)$, which has probability mass $p$, 
and $(1/2,-r)$, with mass $1-p$. We need to appropriately select the parameters $p$ and $r$, 
so that $\bv$ is actually the minimizer of the function $G(\bw)$, and the misclassification error 
(which is equal to $p$ in this case) is maximized.  

Note that $\bv$ satisfies $\E_{(\bx,y)\sim \D_2}[\phi^\prime(y \langle\bv, \bx\rangle)(y\cdot \bx_2)]=0$. 
Substituting for this particular distribution $\D_2$ with noise level $0$ on both points, 
we get: 
\[ p\cdot \phi^\prime(0)\cdot (2\gamma) +(1-p)\phi^\prime(1/2)\cdot(-r)=0 \;. \]
Since $\phi^\prime$ is monotone, we get: 
\[ p|\phi^\prime(0)|\cdot (2\gamma) =(1-p)|\phi^\prime(1/2)|\cdot r \;. \]
By rearranging, we get:
\[
p=\frac{|\phi^\prime(1/2)|\cdot r}{|\phi^\prime(1/2)|\cdot r + 2\gamma|\phi^\prime(0)|} \;.
\] 
By the definition of Case II and the fact that $\phi$ is decreasing and convex, 
we have that: 
$$|\phi^\prime(1/2)|\geq (\eta/2) |\phi^\prime(-1/2)| \geq  (\eta/2) |\phi^\prime(0)| \;.$$ 
Therefore, we can get misclassification error:
\[
\new{\err_{0-1}^{\D_2} (\sgn(\langle\bv,\bx\rangle))}=p \geq \frac{|\phi^\prime(1/2)|\cdot r}{|\phi^\prime(1/2)|\cdot r + \frac{4\gamma}{\eta}|\phi^\prime(1/2)|}=\frac{1}{1+\frac{4\gamma}{\eta r}} \;.
\]
We note that $r$ must be chosen within the interval $\left[0,\sqrt{3}/2-2\gamma\right]$, 
so that the $\gamma$-margin requirement is satisfied.  

For margin $\frac{\sqrt{3}-1}{4}\gamma\leq \frac{\eta r}{4}$, we get $\new{\err_{0-1}^{\D_2} (\sgn(\langle\bv,\bx\rangle))}> 1/2$, 
and we can achieve error exactly $1/2$ by moving the probability mass $p$ 
from $Q_1(0,-2\gamma)$ to $Q_3(0,-\frac{\eta r}{2})$. 
If $\gamma\geq \frac{\eta r}{4}$, then $\new{\err_{0-1}^{\D_2} (\sgn(\langle\bv,\bx\rangle))}\geq \frac{\eta r}{4\gamma}\geq \frac{\eta r}{8\gamma}$. 
The last inequality comes from the fact that we can pick $r= 1/2 \leq \sqrt{3}/2-2\gamma$.    
This completes the proof of Theorem~\ref{thm:lb-surr}.
\end{proof}

\subsection{Lower Bound Against Convex Surrogate Minimization Plus Thresholding} \label{ssec:thresh}
The lower bound established in the previous subsection does not preclude the possibility
that our algorithmic approach in Section~\ref{sec:alg} giving misclassification error $\approx \eta$
can be improved by replacing the $\LR$ function by a different convex surrogate.
In this section, we prove that using a different convex surrogate in our thresholding approach
indeed does not help.

That is, we show that any approach which attempts 
to obtain an accurate classifier by considering a thresholded region 
cannot get misclassification error better than $\Omega(\eta)$ within that region, i.e., 
the bound of our algorithm cannot be improved with this approach. 
Formally, we prove:

\begin{theorem} \label{thm:lb-thresh}
Consider the family of algorithms that produce a classifier $\sgn(\langle\bw^{\ast}, \bx\rangle)$, 
where $\bw^{\ast}$ is the minimizer of the function 
$G(\bw)=\E_{(\bx,y)\sim \D}[\phi(y\langle \bw,\bx\rangle)]$. 
For any decreasing convex function $\phi: \mathbb{R}\rightarrow \mathbb{R}$, 
there exists a distribution $\D$ over $\B_2 \times \{\pm 1\}$with margin $\gamma\leq \sqrt{3}/8$ 
such that the classifier $\sgn(\langle\bw^{\ast}, \bx\rangle)$ misclassifies a 
$(1-O(\gamma))\cdot\Omega(\eta)$ fraction of the points $\bx$ that lie in the region $\{\bx: \langle \bw, \bx \rangle>T \}$ 
for any threshold $T$.      
\end{theorem}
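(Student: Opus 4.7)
The plan is to reuse the two-case dichotomy from the proof of Theorem~\ref{thm:lb-surr}, but to ensure in each case that the constructed distribution $\D_{\bx}$ is supported on a single vertical line $\{x_1 = c\}$ for some $c > 0$. Once that is arranged, every point in the support satisfies $\langle \bv, \bx \rangle = c$ for the minimizer $\bv = \be_1$, so any threshold $T$ either retains the entire support (when $T < c$) or removes it entirely (when $T \ge c$). In the first regime, the conditional misclassification rate in $\{\bx : \langle \bv, \bx \rangle > T\}$ coincides with the overall misclassification rate, so the analysis of Theorem~\ref{thm:lb-surr} immediately transfers; in the second regime the claim is vacuous.

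In Case I, where $|\phi'(z)| < \tfrac{1}{2}\tfrac{\eta}{1-\eta}|\phi'(-z)|$ for some $z \in (0, \sqrt{3}/2]$, the original construction of Theorem~\ref{thm:lb-surr} already places both supported points $(z, -\gamma)$ and $(z, \sqrt{1-z^2})$ on the line $\{x_1 = z\}$. My plan here is to invoke that proof verbatim and observe that, since all of $\D_{\bx}$ lies at the single level $\langle \bv, \bx \rangle = z$, the $\Omega(\eta/\gamma)$ overall misclassification bound it establishes is automatically a bound on the thresholded rate whenever the region is nonempty.

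In Case II, where $|\phi'(z)| \ge \tfrac{1}{2}\tfrac{\eta}{1-\eta}|\phi'(-z)|$ for all $z \in [0, \sqrt{3}/2]$, the original construction uses points at $x_1 = 0$ and $x_1 = 1/2$, which a threshold placed between them would separate. I would replace it with a two-point distribution supported on $\{x_1 = 1/2\}$: Point A at $(1/2, -1/2)$ with mass $p$ and true label $-1$, and Point B at $(1/2, \sqrt{3}/2)$ with mass $1 - p$ and true label $+1$, consistent with a true classifier $\bw^{\mathrm{true}}$ chosen at a small angle off $\be_2$ so that both points attain margin $\gamma$. I would place Massart noise of rate $\eta$ on Point B (the correctly-classified point) and leave Point A noiseless; this asymmetry is needed because two noiseless points on the same side of the $x_2$-axis but with opposite labels would produce gradient contributions of the same sign in the first-order optimality equation, preventing $\bv = \be_1$ from being stationary.

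The main obstacle is the Case II algebra: writing the stationarity equation $\E_{(\bx,y) \sim \D}[\phi'(y/2)\, y x_2] = 0$ at $\bv = \be_1$ with the noise on Point B, substituting the Case II inequality $|\phi'(-1/2)| \le 2\tfrac{1-\eta}{\eta}|\phi'(1/2)|$, and solving for $p$. A routine manipulation analogous to the one in Theorem~\ref{thm:lb-surr} should yield $p \ge \Omega(\eta)$, with the $(1-O(\gamma))$ correction in the theorem statement absorbing lower-order terms coming from the margin-dependent adjustment of $\bw^{\mathrm{true}}$. With that bound on $p$ in hand, the thresholded misclassification claim is automatic: every point lies on the single level set $\{\langle \bv, \bx \rangle = 1/2\}$, so the conditional misclassification rate inside $\{\bx : \langle \bv, \bx \rangle > T\}$ is either $p = \Omega(\eta)$ (when $T < 1/2$) or undefined (when $T \ge 1/2$), as required.
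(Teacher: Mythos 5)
Your treatment of Case~I is correct and matches the paper: the two support points of $\D_1$ already share the same $x_1$-coordinate $z$, so every threshold $T$ either keeps all the mass or none, and the $\Omega(\eta/\gamma)$ bound from Theorem~\ref{thm:lb-surr} transfers directly. Your high-level observation about placing all mass on a single vertical line is also exactly the right idea, and it is what the paper does.

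The gap is in your Case~II construction, and it is fatal as stated. You put Point~A at $(1/2,-1/2)$ with (clean) label $-1$ and Point~B at $(1/2,\sqrt{3}/2)$ with label $+1$ carrying Massart noise $\eta$. Both points lie in $\{x_1>0\}$, so the classifier $\sgn(\langle\be_1,\bx\rangle)$ predicts $+1$ on both, and the stationarity condition $\E[\phi'(y/2)\,y\,x_2]=0$ at $\bv=\be_1$ reads, writing $A=|\phi'(-1/2)|$ and $B=|\phi'(1/2)|$,
\[
-\tfrac{p}{2}A \;-\; (1-p)(1-\eta)\tfrac{\sqrt{3}}{2}B \;+\; (1-p)\eta\tfrac{\sqrt{3}}{2}A \;=\;0 \;,
\]
which forces $pA = (1-p)\sqrt{3}\,[\eta A - (1-\eta)B]$. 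This admits a solution $p\in(0,1)$ only if $\eta A > (1-\eta)B$, i.e.\ $B < \tfrac{\eta}{1-\eta}A$. But the Case~II hypothesis only gives $B \geq \tfrac12\tfrac{\eta}{1-\eta}A$; it places no useful \emph{upper} bound on $B$, and in fact $B$ can be as large as $A$ (take $\phi$ linear, or any $\phi$ with $|\phi'(1/2)|$ close to $|\phi'(-1/2)|$), in which case $\eta A-(1-\eta)B=(2\eta-1)A<0$ and there is no valid $p$. So for a large portion of the functions $\phi$ that land in Case~II, $\be_1$ simply cannot be made a stationary point of $G$ with your distribution, and the whole argument collapses. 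Swapping which point carries the noise, or removing the noise entirely, does not help: the root cause is that Point~A has $x_2<0$ and label $-1$ while Point~B has $x_2>0$ and label $+1$, so both clean terms contribute $y\,x_2>0$ (hence the same sign of $\phi'(y/2)\,y\,x_2$), and no $p\in(0,1)$ can balance them.

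The paper resolves this by choosing a different geometry for Case~II: both support points sit on the same vertical line $x_1=1/4$ \emph{and} on the same side of the horizontal axis, at $(1/4,\sqrt{3}/4\pm 2\gamma)$, with opposite \emph{noiseless} labels. Because the misclassified point then has $y\,x_2<0$ while the correctly classified one has $y\,x_2>0$, the two stationarity terms automatically carry opposite signs, and the balance equation $p|\phi'(-1/4)|(\sqrt{3}/4+2\gamma)=(1-p)|\phi'(1/4)|(\sqrt{3}/4-2\gamma)$ has a solution $p\in(0,1)$ for every convex decreasing $\phi$. Plugging in the Case~II bound then yields $p \ge (1-8\gamma\sqrt{3}/3)\,\tfrac{\eta}{4(1-\eta)}$, which is the $(1-O(\gamma))\cdot\Omega(\eta)$ claimed. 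To fix your proof, you should replace your two-point distribution with one of this form (both points sharing the same $x_1$ and the same sign of $x_2$, with opposite labels and no noise), rather than trying to rescue the opposite-sign construction with Massart noise.
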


\begin{proof}
Our proof proceeds along the same lines as the proof of Theorem~\ref{thm:lb-surr}, but with some crucial modifications. In particular, we argue that Case I above remains unchanged but Case II requires a different construction.

Firstly, we note that the points $Q_1,Q_2$ in Case I are the only points 
that are assigned non-zero mass by the distribution and they are at equal 
distance $z$ from the output classifier's hyperplane. Therefore, any set 
of the form $\Ind_{\langle\bv, \bx\rangle >T}$, where $\bv$ is the unit 
vector perpendicular to the hyperplane, will either contain the entire 
probability mass or $0$ mass. Thus, for all the meaningful choices 
of the threshold $T$, we get the same misclassification error as with $T=0$. 
This means that the example distribution and the analysis for Case I remain unchanged. 

However, Case II in the proof of Theorem~\ref{thm:lb-surr} requires modification as the points $Q_1,Q_2$ 
are at different distances from the classifier's hyperplane. 

\new{Here we will restrict our attention to the case where the distances 
of the two points from the classifier's hyperplane are actually equal 
and get a lower bound nearly matching the upper bound in Section \ref{sec:alg}. 
This lower bound applies, due to reasons explained above, 
to all approaches that use a combination of minimizing a convex surrogate function and thresholding.}      

\begin{figure}
\begin{center}
\includegraphics[scale=0.2]{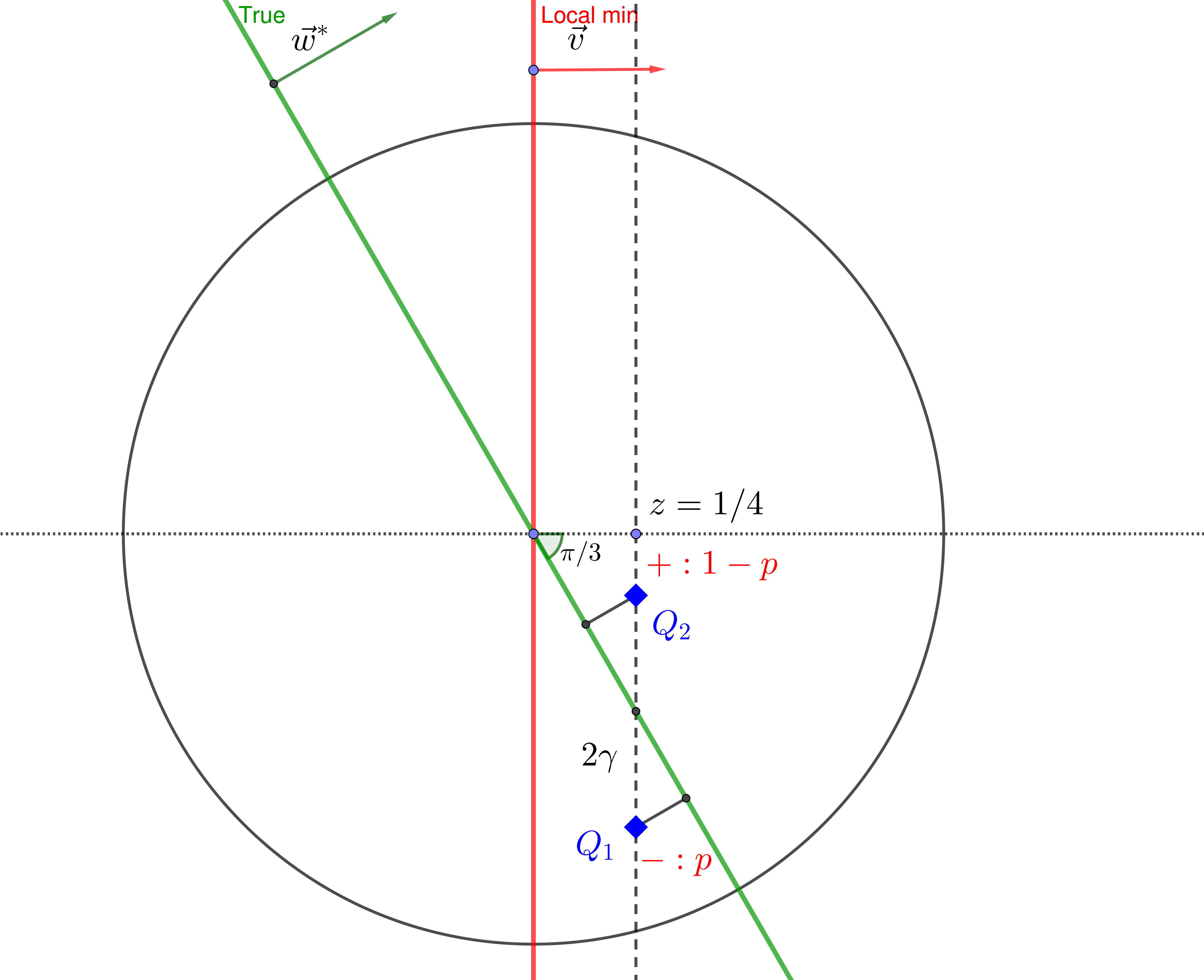}
\end{center}
\caption{Probability Distribution for Modified Case II.}
\label{fg:case2b}
\end{figure}

\paragraph{Modified Case II:}
We recall that in this case the following assumption on the function $\phi$ holds: 
For all $z \in [0,\sqrt{3}/2]$ it holds $|\phi^\prime(z)|\geq\frac12 \frac{\eta}{1-\eta}|\phi^\prime(-z)|$.    

\smallskip

\noindent The new distribution \new{$\D_2^\prime$} is going to be as shown in Figure \ref{fg:case2b}. 
That is, we assign mass $p$ on the point $Q_1(1/4,\sqrt{3}/4+2\gamma)$ 
and mass $1-p$ on the point $Q_2(1/4,\sqrt{3}/4-2\gamma)$. 

Similarly to the previous section, we use the equation:
 $\E_{(\bx,y)\sim \D}[\phi^\prime(y\langle \bv, \bx\rangle)(y\cdot \bx_2)]=0$, 
 that holds for $\bv$ being the  minimum of $G(w)=\E_{(\bx,y)\sim D}[\phi(y\langle \bw, \bx\rangle)]$, 
 to get:
\[
p\cdot \phi^\prime(-1/4)\cdot \left(\sqrt{3}/4+2\gamma\right)+ 
(1-p)\cdot \phi^\prime(1/4)\cdot \left[-\left(\sqrt{3}/4-2\gamma  \right)\right] =0 \;,
\]  
or equivalently:
\begin{align*}
p=\frac{|\phi(1/4)|\left(\sqrt{3}/4-2\gamma  \right)}{|\phi(1/4)|\left(\sqrt{3}/4-2\gamma  \right)
+|\phi(-1/4)|\left(\sqrt{3}/4+2\gamma  \right)}&\geq \frac{\left(\sqrt{3}/4-2\gamma  \right)}{\left(\sqrt{3}/4-2\gamma  \right)
+\frac{2(1-\eta)}{\eta}\left(\sqrt{3}/4+2\gamma  \right)}\\
&\geq \frac{\left(\sqrt{3}/4-2\gamma  \right)}{\left(\sqrt{3}/4+2\gamma  \right)}\cdot \frac{1}{1+\frac{2(1-\eta)}{\eta}}\\
&\geq \left(1-8\gamma\sqrt{3}/3\right)\frac{\eta}{4(1-\eta)} \;.
\end{align*}
This completes the proof of Theorem~\ref{thm:lb-thresh}.
\end{proof}

\section{Conclusions} \label{sec:conc}
The main contribution of this paper is the first non-trivial learning
algorithm for the class of halfspaces (or even disjunctions) 
in the distribution-free PAC model with Massart noise. 
Our algorithm achieves misclassification error $\eta+\eps$ in time 
$\poly(d, 1/\eps)$, where $\eta<1/2$ is an upper bound on the Massart noise rate. 

The most obvious open problem is whether this error guarantee
can be improved to $f(\opt)+\eps$ (for some function $f:\R \to \R$ such that $\lim_{x \to 0} f(x)=0$) 
or, ideally, to $\opt+\eps$. It follows from our lower bound constructions that such an improvement 
would require new algorithmic ideas. It is a plausible conjecture that obtaining better error guarantees 
is computationally intractable. This is left as an interesting open problem for future work.
Another open question is whether there is an efficient {\em proper} learner matching the 
error guarantees of our algorithm. We believe that this is possible, building on the ideas in~\cite{DunaganV04}, 
but we did not pursue this direction. 

More broadly, what other concept classes admit non-trivial algorithms in the Massart noise model?
Can one establish non-trivial reductions between the Massart noise model and the agnostic model?
And are there other natural semi-random input models that allow for efficient PAC learning algorithms 
in the distribution-free setting?

\newpage

\bibliographystyle{alpha}
\bibliography{allrefs}

\newpage

\appendix

\section{Learning Large-Margin Halfspaces with RCN} \label{sec:rcn}

In this section, we show that the problem of learning $\gamma$-margin halfspaces
in the presence of RCN can be formulated as a convex optimization problem that
can be efficiently solved with any first-order method. 
\new{Prior work by Bylander~\cite{Bylander94} used a variant of the  Perceptron algorithm to learn 
$\gamma$-margin halfspaces with RCN. To the best of our knowledge, 
the result of this section is not explicit in prior work.}


In order to avoid problems that would arise if the distribution $\D$ is degenerate 
(i.e., it assigns non-zero mass on a lower dimensional subspace), 
we introduce Gaussian noise to the points of the distribution. That is, 
we sample points $\bx+\br$, where $\br\sim N( \mathbf{0},c^2 \mathbf{I})$ and 
$c \triangleq \frac \gamma { \sqrt{ 2 \log(2/\gamma \eps) } }$.

In particular, we will show that solving the following convex optimization problem:
\begin{equation} \label{eq:conv}
\begin{aligned}
& \underset{\ltwo{\bw}\leq 1}{\text{minimize}}
&& G_\lambda(\bw)=\E_{(\bx,y)\sim \D}\left[ \E_{\br \sim N( \mathbf{0},c^2 \mathbf{I})}[  \LR_{\lambda}(-y \langle \bw, \bx + \br \rangle)] \right]  \;,\\
\end{aligned}
\end{equation}
for $\lambda \triangleq \eta + \frac{ c \eps } { \sqrt{2 \pi} } \approx \eta$ suffices to solve this learning problem.

Intuitively, the idea here is that by adding the right amount of noise $\br$, 
we make sure that: (a) the probability that the true halfspace misclassifies 
the noisy version of a point $\bx$ is negligible, and (b) if a point is misclassified 
by the current halfspace, then it has, on average, a significant contribution to the 
objective function. Therefore, any solution with sufficiently small value yields a halfspace 
misclassifying a small fraction of points.  

As in Section~\ref{ssec:alg-margin}, we choose the parameter $\lambda$ for the $\LR$ function 
such that $G_\lambda(\bw)$ has a slightly negative minimum.  This is done in order to avoid $\bw=\mathbf{0}$ 
being the minimizer of the function $G_\lambda(\bw)$. The minimizer for the convex region $\ltwo{\bw}\leq 1$ 
will instead lie in the (non-convex) set $\ltwo{\bw} = 1$.

We can solve Problem~\eqref{eq:conv} with a standard first-order method through samples using SGD.
Formally, we show the following:

\begin{theorem}\label{thm:rnc}
Let $\D$ be a distribution over $(d+1)$-dimensional labeled examples obtained by an unknown $\gamma$-margin halfspace 
corrupted with RCN at rate $\eta<1/2$. An application of SGD on $G_\lambda(\bw)$ using 
$\tilde{O}(1/(\eps^2 \gamma^4))$ samples returns, with probability $2/3$, 
a halfspace with misclassification error at most $\eta+\eps$.
\end{theorem}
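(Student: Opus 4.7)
The plan is to argue that minimizing $G_\lambda$ via SGD produces a halfspace with misclassification error at most $\eta+\eps$ through three steps: (i) $G_\lambda$ is a well-behaved convex function so SGD converges at the usual rate; (ii) $G_\lambda(\bw^*) \leq -\Omega(c\gamma\eps)$ using the margin; and (iii) any $\bw$ with $G_\lambda(\bw)$ sufficiently negative misclassifies only a small fraction of points. Step (i) is routine: $G_\lambda$ is convex (composition of convex $\LR_\lambda$ with an affine map, then expectation) and positively homogeneous in $\bw$, so the minimum on the unit ball is attained with $\|\bw\|_2=1$. Fixing $\|\bw\|_2 = 1$, the inner Gaussian $\langle \bw, \br\rangle \sim N(0,c^2)$ can be integrated out, and each stochastic (sub)gradient has norm at most $\|\bx\|_2 \leq 1$, so Lemma~\ref{lm:sgd_guarantees} yields $\hat\bw$ with $G_\lambda(\hat\bw) \leq G_\lambda(\bw^*)+\nu$ after $\tilde O(1/\nu^2)$ samples.

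For step (ii), I would exploit the symmetric RCN structure. Let $\psi(z) := \Ex_{R\sim N(0,c^2)}[\LR_\lambda(R-z)]$, a univariate convex function. A short calculation gives $\psi(z) - \psi(-z) = -z$, hence
$$
\Ex_y[\psi(y\langle \bw,\bx\rangle)] \;=\; (1-\eta)\psi(u) + \eta\psi(-u) \;=\; \psi(u) + \eta u \;=:\; F(u),
$$
where $u = f(\bx)\langle\bw,\bx\rangle$. Direct computation yields $F(u) = -\delta u + (1-2\lambda)[c\phi(u/c) - u\Phi(-u/c)]$, with $\delta = \lambda - \eta = c\eps/\sqrt{2\pi}$. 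At $\bw^*$, $u \geq \gamma$, and our choice $c = \gamma/\sqrt{2\log(2/(\gamma\eps))}$ makes $c\phi(\gamma/c) \leq c\gamma\eps/(2\sqrt{2\pi})$. Since $F'(u) = -[(1-\lambda)\Phi(-u/c)+\lambda\Phi(u/c)] + \eta \leq \eta - \lambda < 0$, $F$ is monotonically decreasing, so $G_\lambda(\bw^*) \leq F(\gamma) \leq -\delta\gamma + c\gamma\eps/(2\sqrt{2\pi}) = -c\gamma\eps/(2\sqrt{2\pi})$.

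The main step, and the key difference from the Massart case, is step (iii): I will show the global calibration inequality
$$
G_\lambda(\bw) \;\geq\; \psi(0)\cdot \err_f(\bw) \;-\; \delta
$$
for every unit $\bw$, where $\err_f(\bw) := \Pr_{\bx}[h_\bw(\bx)\neq f(\bx)]$. For misclassified $\bx$, $u(\bx) \leq 0$, and monotonicity of $F$ gives $F(u(\bx)) \geq F(0) = \psi(0) = (1-2\lambda)c/\sqrt{2\pi}$. For correctly classified $\bx$, $u(\bx) \in [0,1]$, and the Mills-ratio inequality $u\Phi(-u/c) \leq c\phi(u/c)$ gives $F(u(\bx)) \geq -\delta u(\bx) \geq -\delta$. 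Averaging over $\bx$ yields the bound. Running SGD to accuracy $\nu = c\gamma\eps/(4\sqrt{2\pi})$ makes $G_\lambda(\hat\bw) \leq -c\gamma\eps/(4\sqrt{2\pi})$, hence $\err_f(\hat\bw) \leq \eps(1-\gamma/4)/(1-2\lambda)$; converting via $\err_{0-1}^\D(\hat\bw) = \eta + (1-2\eta)\err_f(\hat\bw)$ and rescaling $\eps$ gives $\err_{0-1}^\D(\hat\bw) \leq \eta + \eps$. The total sample count is $\tilde O(1/\nu^2) = \tilde O(1/(c\gamma\eps)^2) = \tilde O(1/(\gamma^4\eps^2))$, as claimed.

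The main obstacle to flag is step (iii): unlike the Massart setting (where Lemma~\ref{lm:structural} only gives a good classifier inside a region of non-trivial mass, forcing iterative thresholding), here the symmetric RCN structure collapses $\Ex_y[\psi(y\langle\bw,\bx\rangle)]$ to a single univariate convex, monotone function $F(u)$, which yields the \emph{global} calibration above. This is precisely the reason convex surrogates work end-to-end for RCN but fail for Massart noise, matching the lower bound in Theorem~\ref{thm:lb-surr}. Care is also needed to pick $c$ small enough that Gaussian-tail corrections are $o(\delta\gamma)$ (so $G_\lambda(\bw^*)$ is genuinely negative), but large enough that $\psi(0) = \Theta((1-2\eta)c)$ dominates the $\delta$ term in the calibration inequality; the choice $c = \gamma/\sqrt{2\log(2/(\gamma\eps))}$ balances these tradeoffs.
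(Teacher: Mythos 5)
Your proof is correct and has the same two-part skeleton as the paper (a negative-value bound at $\bw^{\ast}$ plus a global calibration inequality), but the technical route is genuinely different. The paper first decomposes $G_\lambda(\bw,\bx)$ \emph{pointwise over $(\bx,\br)$} into an error-indicator term and a magnitude term (Claim~\ref{claim:glambda-equiv}), then bounds each term by elementary Gaussian-tail estimates (Lemmas~\ref{lem:rcn-opt-bound} and~\ref{lem:rcn-sgd-bound}). You instead integrate out the Gaussian first, obtaining the smoothed univariate loss $\psi$, exploit the RCN symmetry identity $\psi(z)-\psi(-z)=-z$ to collapse $\Ex_y[\psi(y\langle\bw,\bx\rangle)]$ to the single convex, globally monotone decreasing function $F(u)=\psi(u)+\eta u$, and then read the calibration inequality $G_\lambda(\bw)\ge\psi(0)\,\err_f(\bw)-\delta$ directly off $F(0)$ and the Mills-ratio lower bound $F(u)\ge -\delta u$. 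Your version is more computation-heavy (you derive the closed form of $F$), but it is arguably more illuminating: the monotonicity of $F$ is exactly the ``calibration'' property that makes the one-shot convex surrogate work under RCN, and its failure under Massart noise (where the effective $F$ would depend on the unknown $\eta(\bx)$) explains the need for the iterative thresholding of Section~\ref{sec:alg}, which you correctly flag. One small caveat worth noting: the calibration constant $\psi(0)=(1-2\lambda)c/\sqrt{2\pi}$ requires $\lambda<1/2$, i.e.\ $\eta$ bounded below $1/2$ by more than $\delta=O(\gamma\eps)$; the paper's presentation has the same implicit requirement, so this is not a gap, just something to keep in mind when rescaling $\eps$.
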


\noindent The rest of this section is devoted to the proof of Theorem~\ref{thm:rnc}.

We consider the contribution to the objective $G_\lambda$ of a single point $\bx$, denoted by $G_\lambda(\bw, \bx)$.
That is, we define $G_\lambda(\bw, \bx) = \E_{y \sim \D_y(\bx)} [ \E_{\br \sim N(0, c^2 I)}[ \LR_{\lambda}(-y \langle \bw, \bx + \br \rangle)] ]$ and write $G_\lambda(\bw) = \E_{\bx \sim \D_\bx} [ G_\lambda(\bw, \bx) ]$. 

We start with the following claim:

\begin{claim} \label{claim:glambda-equiv}
$G_\lambda(\bw,\bx)$ can be rewritten as:
$$(1-2\eta) \cdot \E_{\br \sim N(0, c^2 I)}\big[ 
 | \langle \bw, \bx + \br \rangle | \Ind_{h_\bw(\bx + \br) \ne h_{\bw^\ast}(\bx)} \big] - (\lambda - \eta)  \cdot \E_{\br \sim N(0, c^2 I)}\big[ | \langle \bw, \bx + \br \rangle |\big].$$
\end{claim}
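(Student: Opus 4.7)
The plan is a direct case analysis, decoupling the expectation over $y$ (which under RCN is a simple Bernoulli over $\{h_{\bw^*}(\bx), -h_{\bw^*}(\bx)\}$) from the expectation over the Gaussian perturbation $\br$. Let me write $m = |\langle \bw, \bx + \br\rangle|$ and $A$ for the event $\{h_\bw(\bx+\br) \ne h_{\bw^*}(\bx)\}$, both depending on $\br$. Note that whenever $h_\bw(\bx+\br) = h_{\bw^*}(\bx)$ we have $h_{\bw^*}(\bx) \cdot \langle \bw,\bx+\br\rangle = m$, and whenever they disagree we have $h_{\bw^*}(\bx) \cdot \langle \bw,\bx+\br\rangle = -m$. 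This is the one algebraic fact that makes the whole computation clean.

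Next I would expand $\LR_\lambda$ in the four sub-cases determined by whether $y = h_{\bw^*}(\bx)$ or $y = -h_{\bw^*}(\bx)$ (i.e.\ whether the RCN flip occurred) and by whether event $A$ holds. In the clean case ($y = h_{\bw^*}(\bx)$ and $\neg A$), the argument to $\LR_\lambda$ is $-m \le 0$, giving $-\lambda m$; in the case $y = h_{\bw^*}(\bx)$ and $A$, the argument is $+m \ge 0$, giving $(1-\lambda)m$. Symmetrically, when $y = -h_{\bw^*}(\bx)$, the sub-cases flip and we get $(1-\lambda)m$ under $\neg A$ and $-\lambda m$ under $A$.

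Combining the two sub-cases within each of Case 1 and Case 2 via $\Ind_{\neg A} = 1 - \Ind_A$, I get the per-noise-realization expressions $-\lambda m + m\Ind_A$ (Case 1) and $(1-\lambda)m - m\Ind_A$ (Case 2). Weighting by the RCN probabilities $1-\eta$ and $\eta$ respectively and collecting like terms,
\begin{align*}
G_\lambda(\bw,\bx)
&= \E_{\br}\bigl[\,(1-\eta)(-\lambda m + m\Ind_A) + \eta((1-\lambda)m - m\Ind_A)\,\bigr] \\
&= -\bigl[(1-\eta)\lambda - \eta(1-\lambda)\bigr]\E_{\br}[m] + (1-2\eta)\E_{\br}[m\,\Ind_A] \\
&= -(\lambda - \eta)\E_{\br}[m] + (1-2\eta)\E_{\br}[m\,\Ind_A],
\end{align*}
which is precisely the claimed identity after substituting back $m = |\langle\bw,\bx+\br\rangle|$ and $\Ind_A = \Ind_{h_\bw(\bx+\br)\ne h_{\bw^*}(\bx)}$.

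There is no real obstacle here: the argument is just bookkeeping. The only point that needs a moment of care is the sign convention: the event $A$ is defined using $h_{\bw^*}(\bx)$ and not the (possibly noisy) label $y$, so one must be careful not to conflate ``$\bw$ misclassifies the noisy point relative to the true halfspace'' with ``$\bw$ misclassifies the noisy point relative to the observed label.'' Once the bookkeeping is done cleanly, the cancellation $(1-\eta)\lambda - \eta(1-\lambda) = \lambda - \eta$ and $(1-\eta) - \eta = 1-2\eta$ delivers the stated decomposition.
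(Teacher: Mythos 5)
Your proof is correct and follows essentially the same case-analysis route as the paper, which states that Claim~\ref{claim:glambda-equiv} is proved analogously to Claim~\ref{claim:relationship}. Your bookkeeping over the four sub-cases (flip / no flip, agree / disagree) together with the identity $h_{\bw^*}(\bx)\langle\bw,\bx+\br\rangle = \pm m$ and the cancellations $(1-\eta)\lambda - \eta(1-\lambda)=\lambda-\eta$, $(1-\eta)-\eta=1-2\eta$ is exactly the intended argument.
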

\noindent The proof of the claim follows similarly to the proof of Claim~\ref{claim:relationship} and is omitted.

Given this decomposition, we move on to show that $G_\lambda(\bw^\ast, \bx)$ is sufficiently negative for any $\bx$ 
and provide a lower bound on $G_\lambda(\bw, \bx)$ for any unit vector $\bw$.

\begin{lemma}\label{lem:rcn-opt-bound}
For any $\bx$ such that $| \langle \bw^\ast, \bx \rangle | \ge \gamma$, 
it holds $$G_\lambda(\bw^\ast, \bx) \le - (\lambda - \eta) \gamma /2 = - \tilde {\Omega}( \gamma^2 \eps ) \;.$$
\end{lemma}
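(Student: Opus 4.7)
The plan is to start from the decomposition in Claim~\ref{claim:glambda-equiv} and bound the two terms separately. Specifically, for $\bw = \bw^\ast$ we have
\[
G_\lambda(\bw^\ast,\bx) \;=\; (1-2\eta)\,\underbrace{\E_\br\!\left[|\langle\bw^\ast,\bx+\br\rangle|\,\Ind_{h_{\bw^\ast}(\bx+\br)\ne h_{\bw^\ast}(\bx)}\right]}_{=:A} \;-\; (\lambda-\eta)\,\underbrace{\E_\br\!\left[|\langle\bw^\ast,\bx+\br\rangle|\right]}_{=:B}.
\]
The negative (useful) piece is easy: since $|\cdot|$ is convex and $\br$ is mean zero, Jensen gives $B \ge |\langle\bw^\ast,\bx\rangle| \ge \gamma$. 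So the second term contributes at most $-(\lambda-\eta)\gamma$.

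The core of the argument is to show that $A$ is vanishingly small because the Gaussian smoothing almost never flips the sign of a $\gamma$-margin point. Assume WLOG $\|\bw^\ast\|_2 = 1$ and write $a := \langle\bw^\ast,\bx\rangle$ (say $a \ge \gamma$ by symmetry), so $\langle\bw^\ast,\br\rangle = c U$ with $U \sim N(0,1)$. The sign flips iff $cU < -a$, and on this event $|\langle\bw^\ast,\bx+\br\rangle| = -(a+cU) = c|U| - a$. Using the standard identity $\int_t^\infty u\,\phi(u)\,du = \phi(t)$ with $t = a/c \ge \gamma/c$,
\[
A \;=\; c\,\phi(t) - a\,\Phi(-t) \;\le\; c\,\phi(\gamma/c) \;=\; \frac{c}{\sqrt{2\pi}}\,e^{-\gamma^2/(2c^2)}.
\]
Now the choice $c = \gamma/\sqrt{2\log(2/(\gamma\eps))}$ is calibrated precisely so that $e^{-\gamma^2/(2c^2)} = \gamma\eps/2$, hence $A \le c\gamma\eps/(2\sqrt{2\pi})$.

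Plugging back in, and recalling $\lambda - \eta = c\eps/\sqrt{2\pi}$,
\[
G_\lambda(\bw^\ast,\bx) \;\le\; (1-2\eta)\cdot\frac{c\gamma\eps}{2\sqrt{2\pi}} - \frac{c\eps}{\sqrt{2\pi}}\cdot\gamma \;\le\; -\,\frac{c\gamma\eps}{2\sqrt{2\pi}} \;=\; -\,\frac{(\lambda-\eta)\gamma}{2},
\]
which is the claimed bound; since $\lambda-\eta = \tilde\Omega(\gamma\eps)$, this is $-\tilde\Omega(\gamma^2\eps)$. The main technical step is really the truncated Gaussian moment calculation balanced against the choice of $c$; everything else is just Claim~\ref{claim:glambda-equiv} and Jensen. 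The only mild subtlety is handling the case $a < 0$ (by symmetry) and the fact that the argument uses only that $\bw^\ast$ is a unit vector, so the same computation would work for any unit $\bw$ in place of $\bw^\ast$ to give a matching lower bound on $G_\lambda(\bw,\bx)$ in terms of the per-point misclassification probability (which is what is needed for the rest of the Theorem~\ref{thm:rnc} proof).
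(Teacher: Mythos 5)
Your proof is correct and follows essentially the same route as the paper: the decomposition from Claim~\ref{claim:glambda-equiv}, Jensen (mean-zero $\br$) to get $B\ge\gamma$, a truncated-Gaussian moment bound $A\le c\phi(\gamma/c)$, and the observation that the choice of $c$ is calibrated so that $c\phi(\gamma/c)=(\lambda-\eta)\gamma/2$. The only cosmetic difference is that you write out the exact one-dimensional Gaussian computation $A=c\phi(t)-a\Phi(-t)$ and then drop the nonnegative $a\Phi(-t)$ term, whereas the paper jumps directly to the upper bound $\E_{r\sim N(0,c^2)}[r\,\Ind_{r\ge\gamma}]$; these are the same estimate.
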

\begin{proof}
For any $\bx$ such that $| \langle \bw^\ast, \bx \rangle | \ge \gamma$, we have that
$$\E_{\br \sim N(0, c^2 I)}\big[ | \langle \bw^\ast, \bx + \br \rangle |\big] \ge 
|\langle \bw^\ast, \bx + \E_{\br \sim N(0, c^2 I)}[\br] \rangle | \ge \gamma.$$
Thus, it suffices to show that:
$$ \E_{\br \sim N(0, c^2 I)}\big[ 
|\langle \bw^{\ast}, \bx + \br \rangle | \Ind_{h_{\bw^{\ast}}(\bx + \br)  \ne h_{\bw^\ast}(\bx)} \big] \le (\lambda - \eta) \gamma/2 \;.$$
We have that 
$$\E_{\br \sim N(0, c^2 I)}\big[| \langle \bw^{\ast}, \bx + \br \rangle | \Ind_{h_{\bw^{\ast}}(\bx + \br) \ne h_{\bw^\ast}(\bx)} \big] 
\le \E_{r \sim N(0, c^2)}\big[r \Ind_{r \ge \gamma} \big] = \frac {c} {\sqrt{2 \pi} } \exp( -(\gamma/c)^2/2) \;.$$
The choice of $c$, implies that 
$\frac {c} {\sqrt{2 \pi} } \exp( -(\gamma/c)^2/2) = \frac {c} {\sqrt{2 \pi} } \eps \gamma/2 = 
(\lambda - \eta)\gamma/2$.
\end{proof}

\begin{lemma}\label{lem:rcn-sgd-bound}
For any unit vectors $\bw,\bx$, it holds 
$$G_\lambda(\bw, \bx) \ge \frac {2 c } {\sqrt{2 \pi} } \left( (1-2\eta) \Ind_{h_\bw(\bx) \neq h_{\bw^\ast}(\bx)} - \eps \right) \;.$$
\end{lemma}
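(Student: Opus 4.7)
The plan is to start from Claim~\ref{claim:glambda-equiv} and work with the decomposition
$$G_\lambda(\bw,\bx) = (1-2\eta)\cdot A(\bw,\bx) - (\lambda-\eta)\cdot B(\bw,\bx),$$
where $A(\bw,\bx) = \E_{\br}[\,|\langle \bw,\bx+\br\rangle|\,\Ind_{h_\bw(\bx+\br) \neq h_{\bw^{\ast}}(\bx)}]$ and $B(\bw,\bx) = \E_{\br}[\,|\langle \bw,\bx+\br\rangle|\,]$, using the calibration $\lambda - \eta = c\eps/\sqrt{2\pi}$. All the work reduces to (i) a uniform upper bound on $B$ and (ii) a lower bound on $A$ that turns on only in the disagreement region.

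First I would control $B$ by the triangle inequality: since $\|\bw\|_2 = \|\bx\|_2 = 1$ and $\langle \bw,\br\rangle \sim \calN(0,c^2)$,
$$B(\bw,\bx) \;\le\; |\langle \bw,\bx\rangle| + \E_{\br}\bigl[|\langle \bw,\br\rangle|\bigr] \;\le\; 1 + c\sqrt{2/\pi},$$
which is bounded by a small absolute constant for the choice of $c$ in Section~\ref{sec:rcn}. This immediately provides the $-\eps$ term on the right-hand side via $(\lambda-\eta)B = O(c\eps/\sqrt{2\pi})$.

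Next I would lower-bound $A$. Setting $s = h_{\bw^{\ast}}(\bx)$ and rewriting the indicator as $\Ind_{-s\langle \bw,\bx+\br\rangle \ge 0}$, one gets $A = \E_{\br}[(-s\langle \bw,\bx+\br\rangle)_{+}]$. Let $\alpha = -s\langle \bw,\bx\rangle$ and $z = -s\langle \bw,\br\rangle \sim \calN(0,c^2)$. Then by the standard Gaussian identity for the mean of the positive part,
$$A \;=\; \E_{z}\bigl[(\alpha+z)_{+}\bigr] \;=\; \alpha\,\Phi(\alpha/c) + c\,\phi(\alpha/c).$$
In the agreement case $h_\bw(\bx) = h_{\bw^{\ast}}(\bx)$ one has $\alpha \le 0$, and I simply use $A \ge 0$. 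In the disagreement case $\alpha \ge 0$; differentiating shows $\tfrac{d}{d\alpha}\bigl(\alpha\,\Phi(\alpha/c) + c\,\phi(\alpha/c)\bigr) = \Phi(\alpha/c) \ge 0$ (the two $\phi$-contributions cancel), so $A$ is non-decreasing on $[0,\infty)$ and therefore $A \ge c\,\phi(0) = c/\sqrt{2\pi}$ throughout the disagreement region.

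Combining, the agreement case yields $G_\lambda(\bw,\bx) \ge -(\lambda-\eta)B \ge -2c\eps/\sqrt{2\pi}$, while the disagreement case yields $G_\lambda(\bw,\bx) \ge (1-2\eta)\,c/\sqrt{2\pi} - 2c\eps/\sqrt{2\pi}$. Both bounds combine into the form claimed by the lemma up to an absolute constant on the leading $(1-2\eta)$ term (this factor is immaterial for the application in Theorem~\ref{thm:rnc}, since any $\Omega(c/\sqrt{2\pi})$ prefactor on the indicator term, together with the $G_\lambda(\bw^\ast) \le -\tilde\Omega(\gamma^2\eps)$ bound of Lemma~\ref{lem:rcn-opt-bound} and SGD convergence, delivers the same $\eta+\eps$ guarantee after constant rescaling of $\eps$). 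The main obstacle I anticipate is the Gaussian monotonicity computation for $\E[(\alpha+z)_{+}]$---in particular seeing the cancellation that makes $A$ non-decreasing in $\alpha$ on $[0,\infty)$---together with verifying that the measure-zero event $\langle \bw,\bx\rangle = 0$ in the disagreement case is handled correctly under the convention $\sgn(0)=1$; this is harmless because the Gaussian smoothing by $\br$ places no atom on the hyperplane.
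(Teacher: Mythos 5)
Your proof is correct and follows essentially the same route as the paper's: both start from Claim~\ref{claim:glambda-equiv}, bound $B = \E_{\br}[\,|\langle\bw,\bx+\br\rangle|\,]$ by an absolute constant, and lower-bound $A$ on the disagreement region by the mean of a half-Gaussian. The only technical divergence is in how $A$ is bounded below: the paper uses the pointwise domination $(\alpha+z)_{+} \ge z_{+}$ for $\alpha \ge 0$ to get $A \ge \E[z_{+}]$ in one step, whereas you invoke the explicit identity $\E[(\alpha+z)_{+}] = \alpha\,\Phi(\alpha/c) + c\,\phi(\alpha/c)$ and differentiate to show monotonicity in $\alpha$; both routes give $A \ge \frac{c}{\sqrt{2\pi}}$. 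Your caveat about the leading constant is well placed and in fact exposes a slip in the paper's proof: its final step asserts $\E_{r \sim N(0,c^2)}[\,r\,\Ind_{r\ge 0}\,] \ge \frac{2c}{\sqrt{2\pi}}$, but this expectation equals exactly $\frac{c}{\sqrt{2\pi}}$ (the value $\frac{2c}{\sqrt{2\pi}}$ is $\E[\,|r|\,]$, not $\E[\,r\,\Ind_{r\ge 0}\,]$), so the lemma as stated carries a spurious factor of $2$ on the indicator term. As you correctly observe, this is immaterial for the application: any $\Omega(c)$ prefactor on $(1-2\eta)\,\Ind_{h_\bw(\bx)\neq h_{\bw^\ast}(\bx)}$, combined with $B \le 2$ and $\lambda - \eta = \frac{c\eps}{\sqrt{2\pi}}$, yields the conclusion of Theorem~\ref{thm:rnc} after a constant rescaling of $\eps$.
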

\begin{proof}
To bound the second term in Claim~\ref{claim:glambda-equiv}, we note that for any $\bx, \bw$, we have that
$$\E_{\br \sim N(0, c^2 I)}\big[ | \langle \bw, \bx + \br \rangle |\big] \le 1 + c \le 2 \;.$$
To bound the first term, note that for any $\bx$ such that $h_\bw(\bx) \neq h_{\bw^\ast}(\bx)$, it holds
$$\E_{\br \sim N(0, c^2 I)}\big[| \langle \bw, \bx + \br \rangle | \Ind_{h_\bw(\bx + \br) \neq h_{\bw^\ast}(\bx)} \big] 
\ge  \E_{r \sim N(0, c^2)}\big[ r \Ind_{r \ge 0} \big]  \ge \frac{2 c}{ \sqrt{2 \pi}} \;.$$
Combining the above gives Lemma~\ref{lem:rcn-sgd-bound}.
\end{proof}

\begin{proof}[Proof of Theorem \ref{thm:rnc}]
Taking expectation in Lemma~\ref{lem:rcn-opt-bound}, 
we get that $G_\lambda(\bw^\ast) \le - \tilde {\Omega}( \gamma^2 \eps )$.
From the guarantees of SGD (Lemma~\ref{lm:sgd_guarantees}), running SGD with $\tilde{O}(1/(\eps^2 \gamma^4))$ 
iterations and samples gives a point $\bw$ where $G_\lambda(\bw) \le G_\lambda(\bw^\ast) + O(\eps \gamma^2) \le 0$. 

Furthermore, taking expectation in Lemma~\ref{lem:rcn-sgd-bound}, we obtain that 
\begin{equation}\label{eq:sgd-guarantee}
  (1-2\eta) \pr_{\bx \sim \D_\bx}[h_\bw(\bx) \neq h_{\bw^\ast}(\bx)] \le \eps.
\end{equation}
Overall, the misclassification error of $h_\bw$ is equal to 
\begin{align*}(1-\eta) & \pr_{\bx \sim \D_\bx}[h_\bw(\bx) \neq h_{\bw^\ast}(\bx)] + \eta (1 - \pr_{\bx \sim \D_\bx}[h_\bw(\bx) \neq h_{\bw^\ast}(\bx)]) = \eta + (1-2 \eta) \pr_{\bx \sim \D_\bx}[h_\bw(\bx) \neq h_{\bw^\ast}(\bx)] \;.
\end{align*}
From~\eqref{eq:sgd-guarantee} we obtain that the above is at most $ \eta + \eps$. 
This completes the proof of Theorem \ref{thm:rnc}.
\end{proof}

\end{document}